\DeclareMathOperator*{\argmax}{argmax}
\begin{document}

\title{GIBBON: General-purpose Information-Based Bayesian OptimisatioN}

\author{\name Henry B. Moss \thanks{Work completed while at the STOR-i Centre for Doctoral Training, Lancaster University, UK.}\email henry.moss@secondmind.ai  \\
       \addr Secondmind.ai  \\ 
    Cambridge, UK
       \AND
        \name David S. Leslie \email d.leslie@lancaster.ac.uk \\
       \addr
       Lancaster University\\
    Lancaster, UK
     \AND
       \name Javier Gonz\'alez \email Gonzalez.Javier@microsoft.com \\
       \addr Microsoft Research\\
       Cambridge, UK
    \AND
    \name Paul Rayson \email p.rayson@lancaster.ac.uk \\
    \addr 
    Lancaster University\\
    Lancaster, UK}

\editor{Marc Peter Deisenroth}
\maketitle

\begin{abstract}
This paper describes a general-purpose extension of max-value entropy search, a popular approach for Bayesian Optimisation (BO). A novel approximation is proposed for the information gain --- an information-theoretic quantity central to solving a range of BO problems, including noisy, multi-fidelity and batch optimisations across both continuous and highly-structured discrete spaces. Previously, these problems have been tackled separately within information-theoretic BO, each requiring a different sophisticated approximation scheme, except for batch BO, for which no computationally-lightweight information-theoretic approach has previously been proposed. GIBBON (General-purpose Information-Based Bayesian OptimisatioN) provides a single principled framework suitable for all the above, out-performing existing approaches whilst incurring substantially lower computational overheads. In addition, GIBBON does not require the problem's search space to be Euclidean and so is the first high-performance yet computationally light-weight acquisition function that supports batch BO over general highly structured input spaces like molecular search and gene design. Moreover, our principled derivation of GIBBON yields a natural interpretation of a popular batch BO heuristic based on determinantal point processes. Finally, we analyse GIBBON across a suite of synthetic benchmark tasks, a molecular search loop, and as part of a challenging batch multi-fidelity framework for problems with controllable experimental noise. 
\end{abstract}

\begin{keywords}
  Bayesian optimisation, entropy search, experimental design, multi-fidelity, batch 
\end{keywords}

\newpage
\section{Introduction}
\label{sec:intro}
A popular solution for the optimisation of high-cost black-box functions is Bayesian optimisation \citep[BO]{mockus1978application}. By sequentially deciding where to make each evaluation as the optimisation progresses, BO can direct resources into evaluating promising areas of the search space to provide efficient optimisation. BO frameworks consist of two key components - a surrogate model and an acquisition function. By fitting a probabilistic surrogate model, typically a Gaussian process \citep[GP]{rasmussen2004gaussian}, to the previously collected objective function evaluations, we are able to quantify our current belief about which areas of the search space maximize our objective function. An acquisition function then uses this belief to predict the utility of making a particular evaluation, producing large values at `reasonable' locations. BO automatically evaluates the location that maximises this acquisition function and repeats until a sufficiently high-performing solution is found. A popular application of BO is hyper-parameter tuning, with successful applications in computer vision \citep{bergstra2013making}, text-to-speech \citep{moss2020boffin} and reinforcement learning  \citep{chen2018bayesian}. Of particular note are the recent extensions of BO beyond Euclidean search spaces, for example when optimising synthetic genes \citep{gonzalez2015bayesian,tanaka2018bayesian,Moss2020} or performing molecular search \citep{gomez2018automatic,griffiths2017constrained, vakili2020scalable}.

Various heuristic strategies have been developed to form BO acquisition functions, including Expected Improvement \citep[EI]{jones1998efficient}, Knowledge Gradient \citep[KG]{frazier2008knowledge} and Upper-Confidence Bound \citep[UCB]{srinivas2009gaussian}. More recently, a particularly intuitive and empirically effective class of acquisition functions has arisen based on information theory.  Information-theoretic BO seeks to reduce uncertainty in the location of high-performing areas of the search space, as measured in terms of differential entropy. Such entropy-reduction arguments have motivated the three primary information-theoretic acquisition functions of Entropy Search \citep[ES]{hennig2012entropy}, Predictive Entropy Search \citep[PES]{hernandez2014predictive} and Max-value Entropy Search \citep[MES]{wang2017max}, differing in their chosen measure of global uncertainty and employed approximation methods. Of particular popularity are acquisition functions based on MES, which reduce uncertainty in the maximum value attained by the objective function, a one-dimensional quantity. In contrast, both ES and PES seek to reduce uncertainty in the location of the maximum, a quantity which, as well as being well-defined only for Euclidean search spaces, requires prohibitively expensive approximation schemes. Due to the large number of acquisition function evaluations required to identify the next query point for each BO step, computational complexity is an important practical consideration when designing acquisition functions, particularly for applications with structured search spaces containing combinatorial elements.

Although the advent of MES acquisition functions has enabled the application of information-theoretic BO beyond problems with low-dimensional Euclidean search spaces, MES can not yet be regarded as a general-purpose acquisition function for two reasons. \begin{enumerate}
    \item Firstly, the existing extensions of MES supporting common BO extensions like Multi-fidelity BO \citep{mumbo} and batch BO \citep{takeno2020multi} require additional approximations beyond those of vanilla MES, typically through the numerical integration of low-dimensional integrals. Multi-fidelity BO (also known as multi-task BO) leverages cheap approximations of the objective function to speed up optimisation, for example through exploiting coarse resolution simulations when calibrating large climate models \citep{priess2011surrogate}, whereas batch BO allows multiple objective function evaluations to be queried in parallel, a scenario arising often in science applications, for example when training a collection of robots to cook \citep{junge2020improving}. Therefore, although still cheaper than their ES- and PES-based counterparts, extensions of MES for multi-fidelity and batch BO do not inherit the simplicity and low-cost of vanilla MES. 
    \item {Secondly, missing from the current extensions of MES is a computationally efficient method for general batch BO.} Asynchronous batch  BO supports scenarios where each of $B$ workers are allocated individually to evaluate different areas of the search space, returning queries and being re-allocated one by one. 
    In contrast, synchronous batch BO considers scenarios where where $B$ workers are to be allocated in parallel, as is the case for many real-world settings including those relying on wet-lab evaluations, physical experiments, or any framework where workers do not have sufficient autonomy to be controlled separately. {\cite{takeno2020multi} propose an low-cost MES formulation suitable for asynchronous batch BO, however, their proposed extension of this method to synchronous batch BO (a distinction discussed in depth by \cite{kandasamy2018parallelised}) require prohibitively expensive approximations.} Consequently, synchronous batch applications of MES have so far relied on generic batch heuristics suitable for any BO acquisition function, including greedy allocation through local penalisation \citep{gonzalez2016batch,alvi2019asynchronous} or using probabilistic repulsion models like determinantal point processes \citep{kathuria2016batched,dodge2017open}, both of which support only Euclidean search spaces.
\end{enumerate}

In this work we provide a single generalisation of MES suitable for BO problems arising from any combination of noisy, batch, single-fidelity, and multi-fidelity optimisation tasks. Crucially, unlike existing extensions of MES, our general-purpose acquisition function retains the computational cost of vanilla MES, with no requirement for numerical integration schemes. Therefore, we provide the first high-performing yet computationally light-weight framework for synchronous batch BO suitable for search spaces consisting of discrete structures.  

Our primary contributions are as follows:
\begin{enumerate}
    \item We propose an approximation for a general-propose extension of MES named General-purpose Information-Based Bayesian Optimisation (GIBBON). This approximation enables application of MES to a wide variety of problems, including those with combinations of synchronous batch BO, multi-fidelity BO and non-Euclidean highly-structured input spaces.
    \item Analysis of GIBBON leads to a  novel connection between information-theoretic search, determinantal point processes \citep[DPP]{kulesza2012determinantal} and local penalisation \citep{gonzalez2016batch}, providing { currently missing} theoretical justification for key attributes of these two popular heuristics previously chosen arbitrarily by users.
    \item We analyse the computational complexity of GIBBON in the wider context of information-theoretic acquisition functions, providing {a} comprehensive evaluation of the computational overheads of information-theoretic BO.
    \item
    We demonstrate the performance of GIBBON across a suite of popular benchmark optimisation tasks, including the first application of information-theoretic acquisition functions to high-cost string optimisation tasks and a sophisticated batch multi-fidelity framework for BO under controllable observation noise.
\end{enumerate}

The remainder of the paper is structured as follows. Section \ref{sec:mes} reviews prior work on MES and introduces the extended acquisition function that will be the focus of this work. In section \ref{sec:approx}, we propose the GIBBON acquisition function, before examining  GIBBON in the context of existing heuristics for batch BO (Section \ref{sec:batch}). In Section \ref{sec:computationalcomplexity}  we consider the computational complexity of GIBBON in the wider context of information-theoretic BO. Finally, Section \ref{sec:experiments} provides a thorough empirical evaluation.

\section{Max-value Entropy Search for Black-Box Function Optimisation}
\label{sec:mes}

We now introduce max-value entropy search (MES) for BO, providing an information-theoretic motivation for the general-purpose framework that is the focus of this manuscript. We then introduce existing work that has applied more restrictive formulations of MES to deal with specific BO tasks, before briefly summarising additional popular acquisition functions that are not based on MES.

BO routines seek the global maximum
\begin{align*}
    \textbf{x}^*=\argmax_{\textbf{x}\subset\mathcal{X}} g(\textbf{x})
\end{align*}
of a `smooth' but expensive to evaluate black-box function $g:\mathcal{X}\rightarrow\mathds{R}$. By sequentially choosing where and how to make each evaluation, BO directs resources into promising areas to efficiently explore the search space $\mathcal{X}\subset\mathds{R}^d$ and provide fast optimisation. In its simplest formulation (henceforth referred to as standard BO), BO controls the locations $\textbf{x}\in\mathcal{X}$ at which to collect (potentially noisy) queries of the objective function. A more general framework is that of  multi-fidelity BO \citep{swersky2013multi} (also known as multi-task BO), where the `quality' of each function query can also be controlled, for example by choosing the level of noise or bias across a (possibly continuous) space of fidelities $\textbf{s}\in\mathcal{F}$. If these lower-fidelity estimates of $g$ are cheaper to evaluate, then BO has access to cheap but approximate information sources that can be used to efficiently maximise $g$. In practical terms, each step of multi-fidelity BO needs to choose a location-fidelity pair $\textbf{z}=(\textbf{x},\textbf{s})\in\mathcal{Z}=\mathcal{X}\times\mathcal{F}$ upon which to make the next evaluation. A further extension arises as batch BO, where we wish to exploit parallel resources by choosing a set of $B\geq 1$ locations $\{\textbf{z}_1,..,\textbf{z}_B\}\in\mathcal{Z}^B$ to be evaluated in parallel.

BO's decisions are governed by two primary components - a surrogate model and an acquisition function. The surrogate model makes probabilistic predictions of the objective function at not-yet-evaluated locations using the already collected location-evaluation tuples $D_n=\{(\textbf{z}_i,{y_{\textbf{z}_i}})\}_{i=1,..,n}$. The most most popular choice of model is  a Gaussian process \citep[GP]{rasmussen2004gaussian}. GPs provide non-parametric regression over all functions of a smoothness controlled by a kernel $k:\mathcal{X}\times\mathcal{X}\rightarrow\mathds{R}$. Crucially, our GP conditioned on $D_n$ produces a tractable Gaussian predictive distribution that quantifies our current belief about the objective function across the whole search space. GP models can also be defined for multi-fidelity optimisation tasks \citep{kennedy2000predicting,le2014recursive,klein2016fast,perdikaris2017nonlinear,cutajar2019deep} and when modelling highly-structured input spaces likes strings \citep{beck2017learning}, trees \citep{beck2015learning} and molecules \citep{moss2020gaussian}.

Given such a probabilistic model over the search space, all that remains to perform an iteration of BO is an acquisition function measuring the utility of making evaluations. The Max-value Entropy Search (MES) of \cite{wang2017max}, with similar formulations considered by \cite{hoffman2015output} and \cite{ru2018fast},  seeks to query the objective function at locations that reduce our current uncertainty in the maximum value of our objective function $g^*=\max_{\textbf{x}\in\mathcal{X}} g(\textbf{x})$. In information theory \citep[see][for a comprehensive introduction]{cover2012elements}, uncertainty in the unknown $g^*$ is measured by its differential entropy $H(g^*|D_n)=-\mathds{E}_{g^*}\left[\log p(g^*)\right]$, where $p$ is the predictive probability distribution function for $g^*$ (as induced by the surrogate model). In particular, the utility of making an evaluation is measured as the reduction in the uncertainty of $g^*$ it provides, a quantity known as the mutual information ($\textrm{MI}$). 

Although initially proposed for just standard BO problems, an MES-based search strategy can be readily formulated for the general batch multi-fidelity framework (described above) by measuring the utility of evaluating a batch of fidelity evaluations as their joint mutual information with the maximum value. { We henceforth refer to this general formulation, formally expressed in Definition \ref{def:GMES}, as General-purpose MES (GMES).}

\begin{definition}[The GMES acquisition function]
The GMES acquisition function is defined as
\begin{align}
 \alpha_n^{\textrm{GMES}}(\{\textbf{z}_i\}_{i=1}^B)\coloneqq& \textrm{MI}(g^*;\{y_{\textbf{z}_i}\}_{i=1}^B|D_n)\nonumber\\=&H(g^*|D_n)-\mathds{E}_{\{y_{\textbf{z}_i}\}_{i=1}^B}\left[H(g^*|D_n\cup\{y_{\textbf{z}_i}\}_{i=1}^B)\right],\label{GMES}\end{align} where $\{\textbf{z}_i\}_{i=1}^B$ denotes the location-fidelity pairs of the batch elements and $y_{\textbf{z}}$ denote the yet-unobserved results of querying location-fidelity pair $\textbf{z}=(\textbf{x},\textbf{s})\in\mathcal{X}\times\mathcal{F}$.
\label{def:GMES}
\end{definition}
Note that standard BO, batch BO and multi-fidelity BO are trivial special cases of this general-purpose framework obtained by either or both of fixing the fidelity space $\mathcal{F}$ to a singleton containing just the true objective function or setting $B=1$. 

To provide resource-efficient optimisation, we must balance how much we expect to learn about $g^*$ with the computational cost of the evaluations. Therefore, following the arguments of \cite{swersky2013multi}, each BO step chooses to evaluate the set of $B$ locations that maximises the cost-weighted mutual information, i.e
\begin{align*}
    \{\textbf{z}_{|D_n|+1},..,\textbf{z}_{|D_n|+B}\}=\argmax_{\{\textbf{z}_i\}_{i=1}^B\in\mathcal{Z}^B}\frac{\alpha^{\textrm{GMES}}_n(\{\textbf{z}_i\}_{i=1}^B)}{c(\{\textbf{z}_i\}_{i=1}^B)},
\end{align*} where $c:\mathcal{Z}^B\rightarrow\mathds{R}^+$ measures the costs of evaluating the batch. This cost function could be known \textit{a priori} or estimated from observed costs \citep{snoek2012practical}.
The optimisation of acquisition functions is known as the \textit{inner-loop} maximisation and, when considering continuous search spaces, is typically performed with a gradient-based optimiser. For discrete search spaces it is common to use local optimisation routines like DIRECT  \citep{jones1993lipschitzian} or genetic algorithms \citep{Moss2020}. For search spaces with discrete and continuous dimensions, hybrid optimisers can be used \citep{ru2019bayesian}.

Unfortunately, calculating GMES in its full generality is challenging and providing a practically viable approximation strategy is the major contribution of this work. The primary difficulty in its computation arises from the lack of closed-form expression for the distribution of $g^*$, as required for all differential entropy calculations. We now end this section by discussing the three scenarios where specific sub-cases of GMES have already been used to provide highly effective BO --- a noiseless variant of standard BO, multi-fidelity BO, and a special case of batch BO.

\subsection{Max-value Entropy Search for noiseless standard BO}

Firstly, we consider the original MES formulation of \cite{wang2017max}, where they perform standard BO with noiseless observations. This acquisition function is formally expressed as 
\begin{align}
    \alpha_n^{\textrm{MES}}(\textbf{x})\coloneqq \textrm{MI}(y_{\textbf{x}};g^*|D_n)=H(y_{\textbf{x}}|D_n)-\mathds{E}_{g^*}\left[H(y_{\textbf{x}}|g^*,D_n)|D_n\right].
    \label{eq:MES_orig}
\end{align} 
Here, the symmetric property of mutual information has been used to swap $y_\textbf{x}$ and $g^*$ in its definition, yielding an equivalent (albeit less intuitive) expansion. Crucially, the first term of the expansion of  (\ref{eq:MES_orig})  is now simply the entropy of a multivariate Gaussian distribution with a convenient closed-form. Moreover, \cite{wang2017max} note that under the assumption of exact objective function evaluations (where $y_{\textbf{x}}=g(\textbf{x})$),  the distribution of $y_{\textbf{x}}$ conditional on its maximum possible value (i.e knowing that $y_{\textbf{x}}\leq g^*$) is simply that of a truncated Gaussian, also with a closed-form differential entropy. All that remains to calculate MES is to approximate an expectation over $g^*$. \cite{wang2017max} build a Monte-Carlo estimate of the expectation with a set of samples $\mathcal{M}$ from $g^*$, providing a closed-form approximation of MES as
\begin{align}
    \alpha^{MES}_n(\textbf{x})\approx \frac{1}{|\mathcal{M}|}\sum_{m\in\mathcal{M}}\left[\frac{\gamma_{\textbf{x}}(m)\phi\left(\gamma_{\textbf{x}}(m)\right)}{2\Phi\left(\gamma_{\textbf{x}}(m)\right)}-\log\Phi\left(\gamma_{\textbf{x}}(m)\right)\right],
    \label{eq:MES}
\end{align}
where  $\Phi$ and $\phi$ are the standard normal cumulative distribution and probability density functions (as arising from the expression for the differential entropy of a truncated Gaussian) and $\gamma_{\textbf{x}}(m)=\frac{m-\mu_n(\textbf{x})}{\sigma_n(\textbf{x})}$. Here, $\mu_n(\textbf{x})$ and $\sigma^2_n(\textbf{x})$ are the predictive mean and standard deviation for the objective function value $g$ at location $\textbf{x}$ as easily extracted from our surrogate model. The set of sample max-values $\mathcal{M}$ is built by modelling the empirical cumulative distribution function of $g^*$ with a Gumbel distribution (see \cite{wang2017max} for details) which can be sampled to yield $M$ cheap but approximate sampled max-values. This Gumbel approximation provides a fast sampling strategy and has been successful across a wide range of BO applications  \citep{wang2017max,mumbo,moss2020bosh}
 
For the limited set of BO problems supported by this original MES acquisition function, MES has had great empirical success, typically outperforming other information-theoretic BO methods with an order of magnitude smaller computational overhead. However, once MES arguments are extended to support the more sophisticated BO frameworks (or even just to support noisy function evaluations), we will see that the second term of (\ref{eq:MES}) is no longer (the expectation of) the differential entropy of a truncated Gaussian and additional approximations have to be made.

\subsection{Max-value Entropy Search for multi-fidelity BO}

MES-based search strategies have also been previously used for multi-fidelity BO through the MUlti-task Max-value Bayesian Optimisation (MUMBO) acquisition function of \cite{mumbo} (proposed in parallel by \cite{takeno2020multi}) and, just like original MES, MUMBO has been shown to perform highly efficient BO. However, unlike when collecting exact observations of $g$, fidelity evaluations $y_{\textbf{z}}|g^*$ no longer follow a truncated Gaussian distribution and instead follow an extended skew Gaussian distribution (as shown by \cite{mumbo} and re-derived in Section \ref{sec:approx}) which has no closed-form expression for its differential entropy \citep{azzalini1985class}. Therefore, the MUMBO acquisition function does not inherit all the computational savings of standard MES,  requiring numerical integration. Note that by considering a single fidelity system, where low-fidelity evaluations are just noisy observations of the true objective function, a multi-fidelity formulation of MES also serves as an extended standard (single-fidelity) MES suitable for when evaluations are contaminated with observation noise.

\subsection{Max-value Entropy Search for Batch BO}

Motivated by the empirical success of MES-based acquisition functions, it is natural to wonder if they can be used for batch BO. However, of the two popular batch scenarios of asynchronous and synchronous batches commonly considered in the BO literature, only asynchronous batch BO is currently supported by a MES-based acquisition function \citep{takeno2020multi}.  The primary practical distinction is that, while synchronous batch acquisition functions must be able to measure the total reduction in entropy provided by the joint evaluation of $B$ locations, asynchronous batch BO has only to measure the relative reduction in entropy provided by making a single evaluation whilst taking into account the $B-1$ pending evaluations. Through clever algebraic manipulations, \cite{takeno2020multi} require only single-dimensional numerical integrations when calculating  the relative entropy reduction required for asynchronous batch BO. Unfortunately, as demonstrated in Section \ref{sec:approx}, complex interactions between each of the $B$ fidelity evaluations $y_{\textbf{z}_i}$ once conditioned on $g^*$ (as present in the second term of (\ref{GMES})) prevents the approximation strategies employed by \cite{takeno2020multi} (or those of \cite{wang2017max} or \cite{mumbo}) being extended to the synchronous batch setting. In particular, a naive extension of \cite{takeno2020multi}'s approach requires the prohibitively expensive numerical approximations of $B$-dimensional multivariate Gaussian cumulative density functions. In this work, we propose a novel approximation strategy for (\ref{GMES}) completely free from  numerical integrations, thus providing the first computationally light-weight information-theoretic acquisition function for synchronous batch BO. 

\subsection{Alternatives to Max-value Entropy Search}
As discussed in Section \ref{sec:intro}, MES is not the only information-theoretic BO acquisition function and is a descendent of ES and PES. However, the original ES  and PES, as-well as their extensions for batch BO \citep{shah2015parallel,hernandez2017parallel} and multi-fidelity BO \citep{swersky2013multi,zhang2017information}, seek to reduce the differential entropy of the $d$-dimensional maximiser $\textbf{x}^*$ (rather than the single dimensional $g^*$ targeted by MES). The calculation of  this entropy is challenging, requiring sophisticated and expensive approximation strategies (see Section \ref{sec:computationalcomplexity}). As well as being substantially more expensive than MES, the reliance of ES and PES on coarse approximations means they provide less effective optimisation \citep{wang2017max,mumbo,takeno2020multi}. Moreover, the approximation strategy employed by PES restricts its use to only Euclidean search spaces

Of course, attempts have been made to adapt other standard acquisition functions to multi-fidelity and batch BO, with examples including EI \citep{picheny2010noisy,chevalier2013fast,marmin2015differentiating}, UCB  \citep{contal2013parallel,kandasamy2016gaussian,kandasamy2017multi} and KG \citep{wu2016parallel,wu2018continuous}. However, extensions of EI and UCB, although computationally cheap and often enjoying strong theoretical guarantees, are typically lacking in performance and even though KG-based methods can provide highly effective optimisation, their large computational cost restricts them to problems with function query costs large enough to overshadow very significant overheads (as demonstrated in Section \ref{sec:experiments}). For batch BO, additional heuristic strategies have been developed that are compatible with any acquisition function, with the most popular and empirically successful being the Local Penalisation of \cite{gonzalez2016batch} and DPP-based approach of \cite{kathuria2016batched} (see Section \ref{sec:batch} for a thorough discussion). Alternative but less performant heuristics include approaches based on Stein methods \citep{gong2019quantile} and Thompson sampling \citep{kandasamy2018parallelised}.

\section{A Novel Approximation of General-purpose Max-value Entropy Search}
\label{sec:approx}
In this section, we present the key theoretical contribution of this work: a novel approximation of the GMES acquisition function proposed in Section \ref{sec:mes}. In particular, we formulate GMES in terms of a the Information Gain (IG) --- a measure of entropy reduction often used when pruning decision tree classifiers \citep{raileanu2004theoretical} and when selecting features for statistical models of textual data \citep{yang1997comparative}. The remainder of the section then details a novel approximation strategy for the information gain based on simple well-known information-theoretic inequalities, before demonstrating explicitly how this IG approximation can be used to approximate the GMES acquisition function.

\subsection{GMES as a Function of Information Gain}

Recall our proposed GMES acquisition function (\ref{GMES}), defined as the mutual information between a set of $B$ fidelity evaluations and the objective function's maximum value $g^*$. As in the derivation of the original MES acquisition function (\ref{eq:MES_orig}), the symmetric property of mutual information can be used to yield the expansion
\begin{align}
    \alpha_n^{\textrm{GMES}}(\{\textbf{z}_i\}_{i=1}^B)\coloneqq H(\{y_{\textbf{z}_i}\}_{i=1}^B|D_n)-\mathds{E}_{g^*}\left[H(\{y_{\textbf{z}_i}\}_{i=1}^B|D_n,g^*)|D_n\right]. \label{GMESdef2}
\end{align}

For ease of notation, we now define $A_i=y_{\textbf{z}_i}$ and $C_i=g(\textbf{x}_i)$ for each of the $B$ candidate location-fidelity tuples $\textbf{z}_i$, as well as the multivariate random variables $\textbf{A}=(A_1,..,A_B)$ and $\textbf{C}=(C_1,..,C_B)$. The information gain is then defined as the reduction in the entropy of $\textbf{A}$ provided by knowing the maximal value of $C^*=\max\textbf{C}$, i.e.
\begin{align}
IG_n\left(\textbf{A},m|D_n\right)\coloneqq H(\textbf{A}|D_n)-H(\textbf{A}|C^*<m,D_n),
\label{IG_general}
\end{align}
Comparing (\ref{GMESdef2}) and (\ref{IG_general}), it follows that the GMES acquisition function can be expressed in terms of IG as 
 \begin{align}
    \alpha_n^{\textrm{GMES}}(\{\textbf{z}_i\}_{i=1}^B) = \mathds{E}_{m\sim g^*}\left[IG_n\left(\textbf{A},m|D_n\right)\right]. \nonumber
\end{align}

We can now see that efficiently calculating (\ref{IG_general}) in general scenarios will allow principled max-value entropy search across a wide range of BO settings. This goal is therefore the focus of the remainder of this section.

\subsection{Required Predictive Quantities}

Before presenting our proposed approximation for IG, it is convenient to discuss the distributional forms induced by our surrogate GP model. 
All random variables are now assumed to be conditioned on the arbitrary information set $D_n$, which, alongside references to $n$, is henceforth dropped from our notation. 

Courtesy of our GP surrogate model, we have that 
\begin{align}
\textbf{A}\sim N(\bm{\mu}^A,\Sigma^A),\quad \textbf{C}\sim N(\bm{\mu}^C,\Sigma^C)\quad \textrm{and} \quad \textrm{Corr}(A_i,C_i)=\rho_i,\nonumber
\end{align} for predictive means $\bm{\mu}^C,\bm{\mu}^A\in\mathds{R}^B$, predictive covariances $\Sigma^C,\Sigma^A\in\mathds{R}^{B\times B}$ and a vector of pairwise predictive correlations $\bm{\rho}\in\mathds{R}^B$ (\citeauthor{rasmussen2004gaussian}, \citeyear{rasmussen2004gaussian}; see Appendix \ref{appendix:quantitites} for details on how these predictive quantities are easily extracted from a GP).

In addition to these well-known distributional forms, we can exploit the specific conditional structure of our GP surrogate model (which we describe below and summarise in Figure \ref{DAG}) to derive the conditional distribution of the random variable $\textbf{A}$ given that $C^*<m$. In particular, our planned BO applications ensure that each $A_j$ is conditionally independent of $\{C_i\}_{i\neq j}$ given $C_j$. This condition holds trivially for single-fidelity BO, where the difference between each $A_i$ and $C_i$ is just independent Gaussian noise. For multi-fidelity BO, this condition corresponds exactly to the \textit{multi-fidelity Markov property} that is a key assumption underlying multi-fidelity GP modelling \citep{kennedy2000predicting,le2014recursive,perdikaris2017nonlinear}. 
This is not a restrictive assumption, with \cite {o1998markov} showing that the multivariate Markov property holds for any GP surrogate model with a kernel that can be factorised into a product of kernels, one defined across the fidelity and one across the search space.

\begin{figure}[t]
\centering
\includegraphics[width=0.5\textwidth]{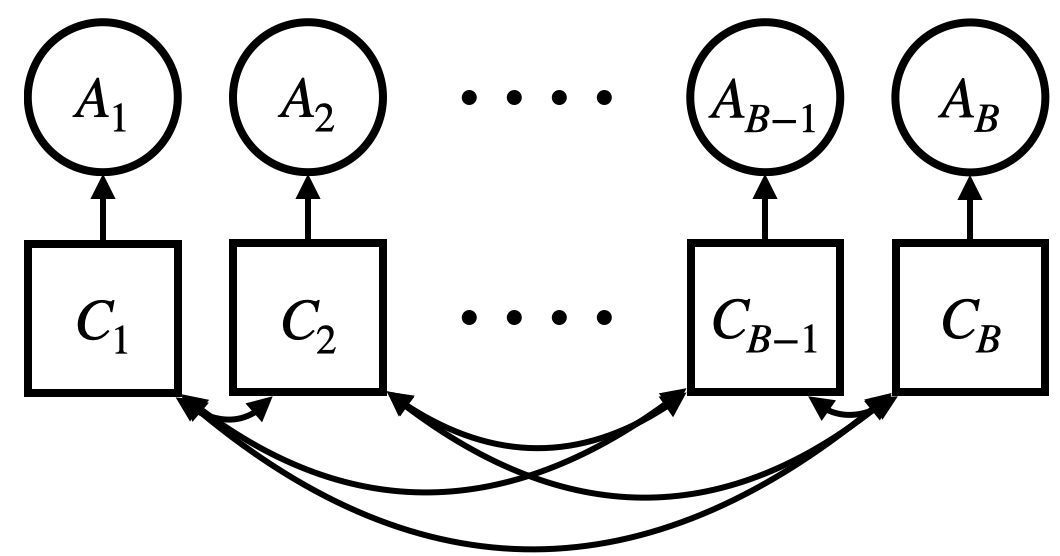}
\caption{The considered dependency structure between the two set of random variables $\{A_1,..,A_B\}$ and $\{C_1,..,C_B\}$. Arrows denote the direction of dependence and latent variables are drawn in squares.
\label{DAG}}
\end{figure}

Under these dependence assumptions, Theorem \ref{thm:distribution} provides the distribution of $\textbf{A}|C^*<m$ in closed-form, yielding  a probability density function that, to the authors' knowledge, has not been previously considered in the statistics literature. Theorem \ref{thm:distribution} provides our first intuition for why the efficient  calculation of the differential entropy $H(\textbf{A}|C^*<m)$ is challenging, i.e. the presence of the multivariate Gaussian cumulative density in its probability density function.

\begin{restatable}[Distribution of $\textbf{A}$ given $C^*<m$]{theorem}{dist}
\label{thm:distribution}
Consider two {$B$-dimensional} multivariate Gaussian random variables $\textbf{A}$ and $\textbf{C}$ where $\textbf{C}\sim N(\bm{\mu}^C,\Sigma^C)$ and each individual component of $\textbf{A}$ is distributed as $A_j\sim N(\mu^A_j,\Sigma^A_{j,j})$. Suppose further that each each pair $\{A_j,C_j\}$ are jointly Gaussian with correlation $\rho_j$, and that each $A_j$ is conditionally independent of $\{C_i\}_{i\neq j}$ given $C_j$. Define $C^*=\max \textbf{C}$. Then the conditional density of $\textbf{A}$ given that $C^*<m$ is given by
\begin{align*}
\frac{1}{\mathds{P}(C^*<m)}\phi_{{\textbf{X}_1}}(\textbf{a})\Phi_{{\textbf{X}_2}}(\textbf{m}),
\end{align*}
where $\textbf{m}=(m,..,m)\in\mathds{R}^B$ and $\phi_{{\textbf{X}_1}}$ and $\Phi_{{\textbf{X}_2}}$ are the probability density and cumulative density functions for the multivariate Gaussian random variables
\[{\textbf{X}_1}\sim \textbf{N}\left(\bm{\mu}^A,S +D\Sigma^C D\right)\quad\textrm{and}\quad{\textbf{X}_2}\sim \textbf{N}\left(\bm{\mu}^C+\Sigma^{-1}DS^{-1}(\textbf{a}-\bm{\mu}^A),\Sigma^{-1}\right),\]
where $\Sigma^A=D\Sigma^CD+S$ for $D$ and $S$, diagonal matrices with elements $D_{j,j}=\rho_j\sqrt{\frac{{\Sigma^A_{j,j}}}{\Sigma^C_{j,j}}}$ and $S_{j,j}=(1-\rho_j^2){\Sigma^A_{j,j}}$, and $\Sigma = \left(\left(\Sigma^C\right)^{-1}+DS^{-1}D\right)$.
\end{restatable}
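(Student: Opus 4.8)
The plan is to compute the conditional density of $\textbf{A}$ given the event $\{C^*<m\}=\{C_1<m,\dots,C_B<m\}$ by starting from the joint density of $(\textbf{A},\textbf{C})$ and integrating out $\textbf{C}$ over the orthant $\{\textbf{c}:c_j<m \ \forall j\}$. First I would write the conditional density as
\begin{align*}
p(\textbf{a}\mid C^*<m)=\frac{1}{\mathds{P}(C^*<m)}\int_{\{\textbf{c}<\textbf{m}\}} p(\textbf{a},\textbf{c})\,d\textbf{c}
=\frac{1}{\mathds{P}(C^*<m)}\int_{\{\textbf{c}<\textbf{m}\}} p(\textbf{a}\mid\textbf{c})\,p(\textbf{c})\,d\textbf{c},
\end{align*}
so the normalising constant $\mathds{P}(C^*<m)=\Phi(\textbf{m};\bm{\mu}^C,\Sigma^C)$ appears automatically and the task reduces to evaluating the orthant integral of a product of Gaussians.

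The central simplification comes from the conditional independence assumption: since each $A_j$ is conditionally independent of $\{C_i\}_{i\neq j}$ given $C_j$, the conditional law of $\textbf{A}$ given $\textbf{C}=\textbf{c}$ depends on $\textbf{c}$ only through a linear ``regression'' term. Concretely, I would use the bivariate Gaussian structure of each pair $\{A_j,C_j\}$ with correlation $\rho_j$ to write $A_j\mid C_j=c_j \sim N\big(\mu^A_j+D_{j,j}(c_j-\mu^C_j),\,S_{j,j}\big)$, where $D_{j,j}=\rho_j\sqrt{\Sigma^A_{j,j}/\Sigma^C_{j,j}}$ is the regression coefficient and $S_{j,j}=(1-\rho_j^2)\Sigma^A_{j,j}$ is the residual variance; conditional independence makes $\textbf{A}\mid\textbf{C}=\textbf{c}\sim N(\bm{\mu}^A+D(\textbf{c}-\bm{\mu}^C),S)$ with $D,S$ diagonal. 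This reproduces the stated decomposition $\Sigma^A=D\Sigma^C D+S$ once we verify it matches the marginal covariance of $\textbf{A}$ (which forces exactly $S_{j,j}=(1-\rho_j^2)\Sigma^A_{j,j}$ on the diagonal and pins down the off-diagonal structure through $D\Sigma^C D$).

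The key algebraic step is then to multiply the Gaussian density $\phi(\textbf{a};\bm{\mu}^A+D(\textbf{c}-\bm{\mu}^C),S)$ by $\phi(\textbf{c};\bm{\mu}^C,\Sigma^C)$ and complete the square jointly in $\textbf{c}$. I expect this to be the main obstacle, as it requires carefully reorganising the exponent of the product of two Gaussians (one in $\textbf{c}$, one in $\textbf{a}-D\textbf{c}$) into a marginal factor in $\textbf{a}$ alone times a conditional Gaussian kernel in $\textbf{c}$ given $\textbf{a}$. The standard Gaussian-times-Gaussian identity yields a product of the form $\phi_{\textbf{X}_1}(\textbf{a})\,\phi(\textbf{c};\bm{\nu}(\textbf{a}),\Sigma)$, where the $\textbf{a}$-marginal has covariance $S+D\Sigma^C D=\Sigma^A$ (matching $\textbf{X}_1\sim N(\bm{\mu}^A,\Sigma^A)$), the conditional precision is $\Sigma^{-1}=(\Sigma^C)^{-1}+DS^{-1}D$, and the conditional mean is $\bm{\nu}(\textbf{a})=\bm{\mu}^C+\Sigma^{-1}DS^{-1}(\textbf{a}-\bm{\mu}^A)$ after simplification (these are exactly the parameters of $\textbf{X}_2$).

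Finally, since the only remaining dependence on $\textbf{c}$ inside the integral is the conditional Gaussian kernel in $\textbf{c}$, integrating it over the orthant $\{\textbf{c}<\textbf{m}\}$ produces precisely the multivariate Gaussian cumulative density $\Phi_{\textbf{X}_2}(\textbf{m})$ of $\textbf{X}_2\sim N(\bm{\nu}(\textbf{a}),\Sigma^{-1})$ evaluated at $\textbf{m}=(m,\dots,m)$. Collecting the $\textbf{a}$-marginal $\phi_{\textbf{X}_1}(\textbf{a})$, the orthant probability $\Phi_{\textbf{X}_2}(\textbf{m})$, and the normalising constant $1/\mathds{P}(C^*<m)$ gives the claimed density. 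The only checks left are routine: confirming the matrix identities $\Sigma^A=D\Sigma^C D+S$ and $\Sigma^{-1}=(\Sigma^C)^{-1}+DS^{-1}D$ are consistent, and verifying the mean of $\textbf{X}_2$ simplifies to the stated form using the Woodbury-type rearrangement of $\Sigma^{-1}DS^{-1}$.
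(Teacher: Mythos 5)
Your proposal is correct and follows essentially the same route as the paper's proof: factorising the joint density via the conditional independence assumption so that $\textbf{A}\mid\textbf{C}=\textbf{c}\sim N(\bm{\mu}^A+D(\textbf{c}-\bm{\mu}^C),S)$, rewriting the product of the two Gaussian densities (the paper invokes the standard product-of-Gaussians identity, which is your completing-the-square step) as an $\textbf{a}$-marginal with covariance $S+D\Sigma^C D$ times a Gaussian kernel in $\textbf{c}$ with precision $(\Sigma^C)^{-1}+DS^{-1}D$ and mean $\bm{\mu}^C+\Sigma^{-1}DS^{-1}(\textbf{a}-\bm{\mu}^A)$, and then integrating that kernel over the orthant to obtain $\Phi_{\textbf{X}_2}(\textbf{m})$. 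The verification steps you flag at the end (consistency of $\Sigma^A=D\Sigma^C D+S$ and the form of the conditional mean) are exactly the routine checks the paper's algebra carries out.
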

\begin{proof}
See Appendix \ref{proofdist}
\end{proof}

Note that in the uni-variate case (i.e $B=1$ and $C^*=C_1$), Theorem \ref{thm:distribution} collapses to the settings already considered when calculating MES and MUMBO in Section \ref{sec:mes}. Firstly, under the strong restriction that  $C_1=A_1$ (arising from BO without observation noise), $A_1|C^*<m$ follows the well-known truncated Gaussian distribution, which can be seen directly from Theorem \ref{thm:distribution} by setting $\rho_j=1$, $\mu_{j,j}^C=\mu_j^A$ and $\Sigma_{j,j}^C=\Sigma_{j}^A$. This truncated Gaussian has a simple analytical expression for its differential entropy which is exploited by standard MES. Secondly, if  $C_j$ and $A_j$ are not perfectly correlated, we see that the density of Theorem \ref{thm:distribution} reduces to that of an Extended Skew Gaussian (ESG) distribution \citep{azzalini1985class} as required for the MUMBO acquisition function (see Appendix A of \cite{mumbo}). Although the differential entropy of an ESG has no closed-form expression \citep{arellano2013shannon}, we will later exploit the fact that its variance has an analytical form \begin{align}
    {\rm Var}(A_j|C_j<m)&=\Sigma^A_{j}\left(1-\rho^2_j\frac{\phi(\gamma_j(m))}{\Phi(\gamma_j(m))}\left[\gamma_j(m)+\frac{\phi(\gamma_j(m))}{\Phi(\gamma_j(m))}\right]\right),
    \label{moments}
\end{align} where $\gamma_j(m)=(m-\mu^C_j)/\sqrt{\Sigma^C_{j,j}}$. We stress that, due to the complex interactions between each $A_j|C^*<m$, the joint distribution of $\textbf{A}|C^*<m$ is not the multivariate ESG discussed by \cite{azzalini1996multivariate}).

\subsection{Approximating Information Gain}

We now present a lower bound $\textrm{IG}^{\textrm{APPROX}}$ for IG as Theorem \ref{thm:lowerbound}. This bound is to be used as an approximation $IG\approx IG^{\textrm{Approx}}$. We stress that replacing the maximisation of an intractable quantity with the maximisation of a lower bound is a well established strategy in the ML literature, for example in variational inference \citep{blei2017variational}.

\begin{restatable}[A lower bound for information gain]{theorem}{lb}
\label{thm:lowerbound}
Under the assumptions of Theorem \ref{thm:distribution}, it holds that $IG(\textbf{A},m)\geq IG^{\textrm{Approx}}(\textbf{A},m)$, where
\begin{align}
IG^{\textrm{Approx}}\left(\textbf{A},m\right)\coloneqq \frac{1}{2}\log|R^A|-\frac{1}{2}\sum_{i=1}^B\log\left(1-\rho^2_i\frac{\phi(\gamma_i(m))}{\Phi(\gamma_i(m))}\left[\gamma_i(m)+\frac{\phi(\gamma_i(m))}{\Phi(\gamma_i(m))}\right]\right) \label{eq:LB},
\end{align} where $R^A\in\mathds{R}^{B\times B}$ is the predictive correlation matrix of $\textbf{A}$ with entries $R^A_{i,j}=\Sigma_{i,j}^A/\sqrt{\Sigma^A_{i,i}\Sigma^A_{j,j}}$. 
\end{restatable}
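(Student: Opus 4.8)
The plan is to prove the bound by evaluating the first entropy term in $IG$ exactly and controlling the second from above. Since $\textbf{A}\sim N(\bm{\mu}^A,\Sigma^A)$, the unconditional term is the closed-form Gaussian entropy $H(\textbf{A})=\tfrac12\log\big((2\pi e)^B|\Sigma^A|\big)$, so the entire task reduces to establishing the upper bound $H(\textbf{A}\,|\,C^*<m)\le \sum_{i=1}^B \tfrac12\log\big(2\pi e\,\textrm{Var}(A_i\,|\,C_i<m)\big)$. Substituting this into $IG=H(\textbf{A})-H(\textbf{A}\,|\,C^*<m)$, cancelling the $\tfrac{B}{2}\log(2\pi e)$ terms, and then using $|\Sigma^A|=|R^A|\prod_i\Sigma^A_{i,i}$ together with the closed form (\ref{moments}) for $\textrm{Var}(A_i\,|\,C_i<m)$ reproduces $IG^{\textrm{Approx}}$ exactly, so that final assembly is routine bookkeeping.

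To bound $H(\textbf{A}\,|\,C^*<m)$ I would chain two elementary information-theoretic facts. First, subadditivity of differential entropy gives $H(\textbf{A}\,|\,C^*<m)\le\sum_i H(A_i\,|\,C^*<m)$. Second, among all distributions with a prescribed variance the Gaussian maximises entropy, so $H(A_i\,|\,C^*<m)\le\tfrac12\log\big(2\pi e\,\textrm{Var}(A_i\,|\,C^*<m)\big)$. Together these reduce the whole theorem to the single variance comparison $\textrm{Var}(A_i\,|\,C^*<m)\le\textrm{Var}(A_i\,|\,C_i<m)$.

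To attack this comparison I would exploit the conditional-independence assumption of Theorem \ref{thm:distribution}. Because $A_i$ depends on $\textbf{C}$ only through $C_i$, I can write $A_i=\mu^A_i+D_{i,i}(C_i-\mu^C_i)+\varepsilon_i$ with $\varepsilon_i\sim N\big(0,(1-\rho_i^2)\Sigma^A_{i,i}\big)$ independent of $\textbf{C}$, where $D_{i,i}=\rho_i\sqrt{\Sigma^A_{i,i}/\Sigma^C_{i,i}}$ is exactly the matrix of Theorem \ref{thm:distribution}. Since any event defined through $\textbf{C}$ leaves $\varepsilon_i$ independent of $C_i$, conditioning on either $\{C^*<m\}$ or $\{C_i<m\}$ yields $\textrm{Var}(A_i\,|\,\cdot)=(1-\rho_i^2)\Sigma^A_{i,i}+D_{i,i}^2\,\textrm{Var}(C_i\,|\,\cdot)$. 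Hence the claim collapses to the purely Gaussian statement $\textrm{Var}(C_i\,|\,C^*<m)\le\textrm{Var}(C_i\,|\,C_i<m)$: truncating the remaining coordinates of $\textbf{C}$ from above, on top of the truncation $\{C_i<m\}$, must not increase the conditional variance of $C_i$.

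This last inequality is the crux and the step I expect to be hardest, precisely because it is nearly tight rather than slack: a direct calculation for two standard normals with correlation $0.9$ and $m=0$ gives about $0.3625$ against $0.3634$, so a glib ``extra conditioning reduces variance'' appeal will not do (for a \emph{fixed} conditioning event this is in general false). The handle I would use is to write the law of $C_i$ given $\{C^*<m\}$ as the law of $C_i$ given $\{C_i<m\}$ tilted by the factor $h(c)=\mathds{P}\big(C_j<m,\ \forall j\neq i\mid C_i=c\big)$, which is a multivariate Gaussian cumulative density, and then to show this log-concave tilt does not inflate the second central moment. Establishing that the induced change of variance has the correct sign is the delicate part; once it is secured, the assembly in the first paragraph closes the proof.
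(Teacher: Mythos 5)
Your proposal tracks the paper's own proof for most of its length: both evaluate $H(\textbf{A})$ exactly as a Gaussian entropy, both invoke subadditivity $H(\textbf{A}|C^*<m)\le\sum_{i=1}^B H(A_i|C^*<m)$, and both finish with the Gaussian maximum-entropy bound combined with (\ref{moments}) and $|\Sigma^A|=|R^A|\prod_i\Sigma^A_{i,i}$. The difference is the order of the two middle steps. The paper first swaps the conditioning event, asserting $H(A_i|C^*<m)\le H(A_i|C_i<m)$ on the grounds that $\{C^*<m\}$ implies $\{C_i<m\}$, and only then applies the maximum-entropy bound to the ESG $A_i|C_i<m$. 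You apply the maximum-entropy bound directly to $A_i|C^*<m$ and are then left needing $\textrm{Var}(A_i|C^*<m)\le\textrm{Var}(A_i|C_i<m)$, which your (correct) linear-Gaussian reduction converts into the purely Gaussian statement $\textrm{Var}(C_i|C^*<m)\le\textrm{Var}(C_i|C_i<m)$. You do not prove this statement; you only sketch a tilting strategy and flag it as the crux. So, as written, the proposal has a genuine gap at exactly its self-identified hardest step.

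Moreover, that gap cannot be closed, because the variance comparison you need is false in general. Take $B=2$, the noiseless case $A_i=C_i$, $\textbf{C}$ standard bivariate Gaussian with correlation $\rho\in(0,1)$, and let $m\to-\infty$. Writing $u_i=|m|(m-C_i)\ge 0$ and expanding the Gaussian density around the corner $(m,m)$, the conditional law of $(u_1,u_2)$ given $\{C^*<m\}$ converges to a pair of \emph{independent} exponentials with mean $1+\rho$, so $\textrm{Var}(C_i|C^*<m)\sim(1+\rho)^2/m^2$, whereas $\textrm{Var}(C_i|C_i<m)\sim 1/m^2$. Hence for any $\rho>0$ the inequality reverses once $m$ is low enough; your own numerical check at $\rho=0.9$, $m=0$ ($0.3625$ versus $0.3634$) is already within a hair of equality, and lowering $m$ flips the sign. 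Notably, the same regime defeats the paper's corresponding step as well: in the limit $H(A_i|C^*<m)-H(A_i|C_i<m)\to\log(1+\rho)>0$, reflecting that ``conditioning reduces entropy'' is a statement about averaging over a conditioning random variable, not about nested events, so it does not follow from the inequalities the paper cites. Chasing constants (the exponential-versus-Gaussian max-entropy slack is $\tfrac{1}{2}\log(2\pi e)-1\approx 0.419$ per coordinate, against a violation of $\log(1+\rho)$) shows the claimed bound $IG\ge IG^{\textrm{Approx}}$ itself fails in this regime once $\rho$ exceeds roughly $0.52$. Your instinct that this step is delicate rather than routine is therefore exactly right, but no argument --- by your route or the paper's --- can establish it for all $m$; a correct version of Theorem \ref{thm:lowerbound} would require additional restrictions, for instance on $m$ or on the predictive correlations of the batch.
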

\begin{proof}

Recall the definition of information gain $IG\left(\textbf{A},m\right)\coloneqq H(\textbf{A})-H(\textbf{A}|C^*<m)$. The first term of IG is simply the differential entropy of a multivariate Gaussian distribution and so can be written in closed-form as $H(\textbf{A}) = \frac{1}{2}\log\left[(2\pi e)^B\big|\Sigma_A\big|\right]$, where $\big|\Sigma_A\big|$ is the determinant of the $B\times B$ co-variance matrix of $\textbf{A}$. Unfortunately calculating the second term of IG is significantly more complicated, with a closed form expression only in the limited cases discussed above. 

We now build an analytical upper bound for $H\left(\textbf{A}|C^*<m\right)$ by exploiting three common information-theoretic inequalities. As derived in \cite{cover2012elements}, we know that,
\begin{align}
    H(\textbf{A})\leq\sum_{i=1}^BH(A_i),\quad
       H(A_i|C^*<m)\leq H(A_i{ | C_i<m}),\quad \textrm{and}\quad
   H(A_i)\leq\frac{1}{2}\log2\pi{\rm e}\textrm{Var}(A_i),\nonumber
\end{align} 
{ where the second inequality is due to $\{C^*<m\}$ being a stronger condition than (i.e. implying that) $\{C_i<m\}$}.

Applying the first two of these inequalities in sequence to $\textbf{A}|C^*<m$  yields the upper-bound \[H(\textbf{A}|C^*<m)\leq \sum_{i=1}^BH(A_i|C_i<m).\] Then, as we know that $A_j|C_j<m$ is an ESG (with a closed form expression for its variance), we can apply the third information-theoretic inequality to yield the analytical upper bound
 \begin{align}
H(\textbf{A}|C^*<m)\leq&\frac{1}{2}\sum_{i=1}^B\log(2\pi e{\rm Var}(A_i|C_i<m))\nonumber\\
=& \frac{1}{2}\sum_{i=1}^B\log2\pi e\Sigma^A_{i,i}\left(1-\rho^2_j\frac{\phi(\gamma_i(m))}{\Phi(\gamma_i(m))}\left[\gamma_i(m)+\frac{\phi(\gamma_i(m))}{\Phi(\gamma_i(m))}\right]\right).\nonumber
\end{align} Substituting this upper bound into (\ref{IG_general}), we have a lower bound for the information gain
\begin{align}
IG^{\textrm{Approx}}\left(\textbf{A},m\right)\coloneqq
\frac{1}{2}\log\big|\Sigma^A\big|\nonumber-& \frac{1}{2}\sum_{i=1}^B\log\Sigma^A_{i,i}\left(1-\rho^2_j\frac{\phi(\gamma_i(m))}{\Phi(\gamma_i(m))}\left[\gamma_i(m)+\frac{\phi(\gamma_i(m))}{\Phi(\gamma_i(m))}\right]\right)\nonumber\\= \frac{1}{2}\log\big|\Sigma^A\big|+& \frac{1}{2}\log\prod_{i=1}^b\left(\Sigma^A_{i,i}\right)^{-1} - \nonumber\\& \frac{1}{2}\sum_{i=1}^b\log\left(1-\rho^2_j\frac{\phi(\gamma_i(m))}{\Phi(\gamma_i(m))}\left[\gamma_i(m)+\frac{\phi(\gamma_i(m))}{\Phi(\gamma_i(m))}\right]\right) \nonumber,
\end{align} which after defining the predictive correlation matrix $R^A$ (with entries $R^A_{i,j}=\Sigma_{i,j}^A/\sqrt{\Sigma^A_{i,i}\Sigma^A_{j,j}}$) and noting that
\begin{align*}
    \frac{1}{2}\log\big|\Sigma^A\big|+ \frac{1}{2}\log\prod_{i=1}^b(\Sigma_{i,i}^A)^{-1} &= \frac{1}{2}\log\left|
    \begin{pmatrix} \frac{1}{\sqrt{\Sigma_{1,1}^A}} & & 0\\ & \ddots & \\0 & & \frac{1}{\sqrt{\Sigma_{b,b}^A}}\end{pmatrix}
    \Sigma^A
    \begin{pmatrix} \frac{1}{\sqrt{\Sigma_{1,1}^A}} & & 0\\ & \ddots & \\0 & & \frac{1}{\sqrt{\Sigma_{b,b}^A}}\end{pmatrix}
    \right| \\
    &= \frac{1}{2}\log \big|R^A\big|,
\end{align*} provides the claimed expression.
\end{proof}

\subsection{GIBBON: General-purpose Information-Based Bayesian OptimisatioN}
\label{GIBBON}
We end this section with explicitly demonstrating how $IG_{\textrm{Approx}}$ can be used to approximate the GMES acquisition function. Recall that GMES can be expressed in terms of IG as  
\begin{align}
    \alpha_n^{\textrm{GMES}}(\{\textbf{z}_i\}_{i=1}^B) = \mathds{E}_{m\sim g^*}\left[IG_n\left(\textbf{A},m|D_n\right)\right]. \nonumber
\end{align}

We have already provided an approximation for IG and so all that remains to approximate GMES is to deal with its outer expectation over $g^*$. Following the arguments of \cite{wang2017max}, we build a Monte-Carlo approximation of this expectation using a Gumbel-based sampler. Therefore, given a set of sampled max-values $\mathcal{M}=\{m_1,..,m_M\}$ of $g^*|D_n$ and access to the predictive distributions
\begin{align}
\{y_{\textbf{z}_i}\}_{i=1}^B|D_n\sim N(\bm{\mu}^y,\Sigma^y),\quad \{g(\textbf{x}_i)\}_{i=1}^B|D_n\sim N(\bm{\mu}^g,\Sigma^g)\quad \textrm{and} \quad \textrm{Corr}(y_{\textbf{z}_i},g(\textbf{x}_i)|D_n)=\rho_i,\nonumber
\end{align} we can approximate GMES with \begin{align}\alpha_n^{\textrm{GIBBON}}( \{\textbf{z}\}_{i=1}^B) = \frac{1}{|\mathcal{M}|}\sum_{m\in \mathcal{M}}IG^{APPROX}(\{y_{\textbf{z}_1},..,y_{\textbf{z}_b}\},m).\nonumber\end{align} This construction is henceforth referred to as the General Information-Based Bayesian OptimisatioN (GIBBON) acquisition function and is defined as the closed-form expression in Definition \ref{def:gibbon} and demonstrated within a BO loop as Algorithm \ref{alg:gibbon}. 

\noindent\fbox{\begin{minipage}{\textwidth}
\begin{definition}[The GIBBON acquisition function.]
The GIBBON acquisition function is defined as
\begin{align}\alpha_n^{\textrm{GIBBON}}( \{\textbf{z}\}_{i=1}^B) = \frac{1}{2}\log\big|R\big|-\frac{1}{2|\mathcal{M}|}\sum_{m\in \mathcal{M}} \sum_{i=1}^B\log\left(1-\rho^2_i\frac{\phi(\gamma_i(m))}{\Phi(\gamma_i(m))}\left[\gamma_i(m)+\frac{\phi(\gamma_i(m))}{\Phi(\gamma_i(m))}\right]\right),\nonumber\end{align} where $R$ is the  correlation matrix with elements $R_{i,j}=\Sigma^y_{i,j}/\sqrt{\Sigma^y_{i,i}\Sigma^y_{j,j}}$ and $\gamma_i(m)=\frac{m-\mu^g_i}{\sqrt{\Sigma^g_{i,i}}}$.
\label{def:gibbon}
\end{definition}
\end{minipage}}

\begin{algorithm}
\caption{GIBBON for general-purpose BO tasks.}
\label{alg:gibbon}
\KwIn{Resource budget $R$, Batch size $B$,  Gumbel sample size $N$}

Initialise $n\leftarrow 0$ and spent resource counter $r\leftarrow 0$

Propose initial design $I$

\While{$r\leq R$}{

Begin new iteration $n\leftarrow n + 1$

Fit GP model to collected evaluations $D_n$ 

Simulate $N$ samples from $g^*|D_n$ 

Compute $\alpha_n^{\textrm{GIBBON}}$ as given by Definition \ref{def:gibbon}

Find $B$ locations $\{\textbf{z}_i\}_{i=1}^{B}$ maximising $\frac{\alpha^{\textrm{GIBBON}}_n(\{\textbf{z}_i\}_{i=1}^{B})}{c(\{\textbf{z}_i\}_{i=1}^{B})}$

Evaluate new locations and collect evaluations  $D_{n+1}\leftarrow D_{n} \bigcup\{(\textbf{z}_i,{y_{\textbf{z}_i}})\}_{i=1}^B $

Update spent budget $r\leftarrow r+c(\{\textbf{z}_i\}_{i=1}^{B})$

}
\KwOut{Believed maximiser $\argmax_{\textbf{x}\in D_n} g(\textbf{x})$}
\end{algorithm}

At first glance, GIBBON's  analytical form looks complex. However, as GIBBON contains only simple  algebraic operations, it can be easily calculated in just a few lines of code, unlike existing ES-based and PES-based acquisition functions and all existing extensions of MES (as discussed in depth in Section \ref{sec:computationalcomplexity}).  An important practical consideration for GIBBON is that, for continuous search spaces, it has accessible gradients that can easily be derived from its analytical expression, allowing efficient inner-loop optimisation.

We end this section with a visual analysis of the accuracy of the GIBBON approximation. We consider a standard BO task with exact objective function evaluations (i.e not multi-fidelity or batch optimisation) as, in this setting, the MES acquisition function provides an exact calculation of the entropy reductions. In Figure \ref{fig::2d} we see that the approximation provided by GIBBON is very close to the ground truth provided by MES, with GIBBON and MES sharing modes and differing only in areas of the space that would never be selected by BO, i.e those locations with very low utility.

\begin{figure}
\subfloat[MES acquisition function surface (ground truth).]{\includegraphics[width= 0.48\textwidth]{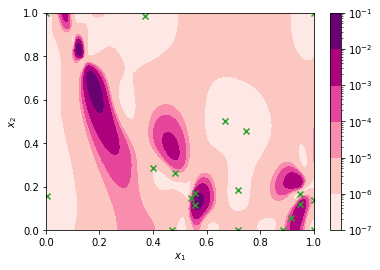}}
\subfloat[GIBBON acquisition function surface.]{\includegraphics[width= 0.48\textwidth]{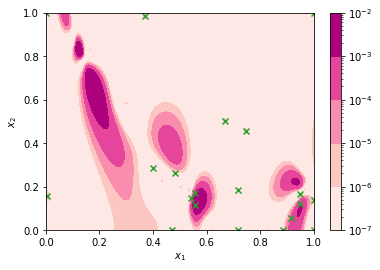}}
\caption{Comparison of the MES and GIBBON acquisition functions for a two-dimensional BO task where MES can calculate entropy reductions exactly. The crosses denote the locations already queried by the BO routine. GIBBON provides a very close approximation of MES that reliably captures all its modes.}
        \label{fig::2d}
\end{figure}

\section{Relationship Between GIBBON and Heuristics for Batch Bayesian Optimisation }
\label{sec:batch}
We now provide insights into the batch capabilities of our GIBBON acquisition function by drawing equivalences between GIBBON and two popular heuristics for batch BO --- determinantal point processes (Section \ref{subsec:DPPrelationship}) and local penalisation (Section \ref{subsec:LPrelationship}). 

Recall that performing an iteration of BO requires the identification of optimal candidate points across the search space, i.e the maximisation of our acquisition function. For GIBBON, this \textit{inner-loop} maximisation task corresponds to allocating a batch of $B$ locations as
\begin{align}
    \{\textbf{z}_{|D_n|+1},..,\textbf{z}_{|D_n|+B}\} = \argmax_{\textbf{z}\in \mathcal{Z}} \alpha_n^{\textrm{GIBBON}}(\{\textbf{z}_i\}_{i=1}^B).\nonumber
\end{align}

Before introducing the two batch BO heuristics, it is convenient to provide an alternative expression for the GIBBON acquisition function. From Definition \ref{def:gibbon}, we see that the GIBBON acquisition function for a candidate batch of $B$ location-fidelity tuples can be decomposed into a sum of $B$ GIBBON acquisition function evaluated separately for each tuple with an additional determinant term as
\begin{align}\alpha_n^{\textrm{GIBBON}}( \{\textbf{z}\}_{i=1}^B) =& \frac{1}{2}\log\big|R\big|+\sum_{i=1}^B\alpha_n^{\textrm{GIBBON}}(\textbf{z}_i), \label{decomp}
\end{align}
where R is the predictive correlation matrix of the batch. Note that the first term of this decomposition encourages diversity within the batch (achieving high values for points with low predictive correlation) whereas the second term ensures that evaluations are targeted in areas of the search space providing large amounts of information about $g^*$.

\subsection{Relationship with Determinantal Point Processes}
\label{subsec:DPPrelationship}
We can now interpret GIBBON in the context of a popular heuristic approach for batch design based on probabilistic models of repulsion known as Determinantal Point Processes (DPPs) \citep{kulesza2012determinantal}. This comparison provides { previously missing} theoretical justification for choices of key DPP attributes which previously had to be chosen arbitrarily by practitioners.

DPPs provide a probability distribution over sets of points, such that sets of high-quality points (as measured by a quality function $q:\mathcal{X}\rightarrow\mathds{R}$) with a diverse spread (as measured by a similarity kernel $s:\mathcal{X}\times\mathcal{X}\rightarrow\mathds{R}^+$) occur with high probability. More precisely, a particular set of points $\{\textbf{x}_i\}_{i=1}^B$ occurs with probability.
\begin{align}
    \mathds{P}(\{\textbf{x}_j\}_{j=1}^B)\propto|L(\{\textbf{x}_j\}_{j=1}^B)|,
    \label{DPP}
\end{align}
where $L(\{\textbf{x}_j\}_{j=1}^B)$ is a $b\times b$ matrix with elements $L_{i,j}=q(\textbf{x}_i)q(\textbf{x}_j)s(\textbf{x}_i,\textbf{x}_j)$. 

Generating diverse but high-quality collections of points is exactly what we seek when allocating batches in BO problems. Unfortunately, a lack of understanding of how to choose appropriate quality functions and similarity kernels \textit{a-priori} have previously limited the performance of DPP methods in BO, with existing applications requiring users to plug in arbitrary choices. The primary complication is that the relative scales of $q$ and $s$ trade-off the quality and diversity of batches, and so, for high-performance BO, these measures must be carefully chosen to complement (rather than dominate) each other. Consequently, the most common approach for using DPPs for BO is as part of \textit{pure exploration} strategies, where the quality function is ignored ($q(\textbf{x})=1$) and a DPP with a radial basis function kernel as its similarity measure is sampled to allocate a whole batch \citep{dodge2017open}, or to allocate the $B-1$ elements remaining after choosing an initial point through a standard sequential BO routine \citep{kathuria2016batched}. Related approaches have also been used for high-dimensional BO \citep{wang2017batched}, where DPPs are used to sample a subset of the available search space dimensions. Note that these existing applications of DPPs to batch BO are  limited in scope, supporting only  single-fidelity problems over Euclidean search spaces, i.e those over which a standard similarity kernel can easily be defined.

We now explicitly show that our GIBBON acquisition function is equivalent to a DPP with specific choices of quality functions and similarity kernels. First define the exponential of our GIBBON acquisition function (with $B=1$) as a quality function $q^{\textrm{G}}(\textbf{z})=\exp\left(\alpha^{\textrm{GIBBON}}(\textbf{z})\right)$ and the predictive correlation (as specified by our GP surrogate model) as a similarity kernel $s^{\textrm{G}}(\textbf{z}_i,\textbf{z}_j)=R_{i,j}$. Then, after defining $L^{\textrm{G}}(\{\textbf{z}_j\}_{j=1}^B)$ as the matrix with elements $L^{\textrm{G}}_{i,j}=q^{\textrm{G}}(\textbf{z}_i)q^{\textrm{G}}(\textbf{z}_j)s^{\textrm{G}}(\textbf{z}_i,\textbf{z}_j)$, simple algebraic manipulations allow the batch GIBBON acquisition function (\ref{decomp}) to be expressed as  
\begin{align*}
    \alpha_n^{\textrm{GIBBON}}(\{\textbf{z}_j\}_{j=1}^B)=\frac{1}{2}log|L^{\textrm{G}}|,
\end{align*}
i.e the maximisation of our acquisition function corresponds to allocating the batch with maximal $|L^G|$, known as the \textit{maximum a posteriori} (MAP) problem of DPPs. This is known to be $NP$-hard \citep{ko1995exact}. However, the submodularity of DPPs ensures reasonable performance of greedy approximate solutions \citep[as demonstrated by][]{gillenwater2012near}, explaining the observed effectiveness of a greedy batch-filling strategy when optimising our  GIBBON acquisition function (see Section \ref{sec:experiments}).

Recasting GIBBON as a DPP  provides the first theoretical motivation for using DPPs for batch BO, with the particular choices of  quality and similarity function arising from our information-theoretical derivation leading to significant improvements over existing DPP heuristics (Section \ref{sec:experiments}). Moreover, we have greatly increased the generality of DPP-based BO, providing { a} formulation that supports multi-fidelity and structured search spaces, or any other framework using a surrogate model where posterior correlation is easily accessible.

\subsection{Relationship with Local Penalisation}
\label{subsec:LPrelationship} 
Another class of popular heuristics for batch BO are those based on local penalisation (LP) \citep{gonzalez2016batch,alvi2019asynchronous}. Rather than explicitly balancing the diversity and quality of batches as two additive contributions, LP methods apply a multiplicative scaling to down-weight an acquisition function around locations already present in the batch, thus ensuring the selection of a diverse set of points. We now show that GIBBON can be interpreted as a penalisation strategy and consequently, {we can} make an explicit link between DPP- and LP-based BO routines. By recasting GIBBON as a local penalisation, we are able to derive a novel theoretically-justified penalisation function that outperforms existing LP methods.

For any choice of acquisition function $\alpha_n:\mathcal{X}\rightarrow\mathds{R}$ taking positive values, an LP strategy greedily chooses the $i^{th}$ element of the $n+1^{th}$ batch as
\begin{align}
    \textbf{x}_{n+1,i}=\argmax_{\textbf{x}\in\mathcal{X}}\alpha_n\left(\textbf{x}\right)\prod_{j=1}^{i-1}\psi(\textbf{x};\textbf{x}_{n+1,j}), \nonumber
\end{align}
where $\psi(\textbf{x},\textbf{x}'):\mathcal{X}\times\mathcal{X}\rightarrow[0,1]$ is a \textit{penalisation function}. By requiring that $\psi(\textbf{x},\textbf{x}')$ is a non-increasing function of $||\textbf{x}-\textbf{x}'||$, we ensure that penalisation is largest when considering $\textbf{x}$ close to elements already present in the batch. The most popular penalisation function is the soft penaliser of \cite{gonzalez2016batch}
\begin{align}
    \psi_{soft}(\textbf{x},\textbf{x}')=\frac{1}{2}\textrm{erfc}(-z) \quad \textrm{for} \quad  z = \frac{1}{\sqrt{\sigma^2_n(\textbf{x}')}}\left(L||\textbf{x}-\textbf{x}'||-g^*+\mu_n(\textbf{x}')\right)\nonumber,
\end{align} 
where $\textrm{erfc}$ is the complementary error function and $g^*$ is the current believed optimum. An important practical consideration of LP routines is that their performance is sensitive to predicting a Lipschitz constant $L$ (i.e $|g(\textbf{x})-g(\textbf{x}')|\leq L||\textbf{x}-\textbf{x}'||\quad\forall\textbf{x},\textbf{x}'\in\mathcal{X}$), for which point-estimates must be carefully extracted from previous function evaluations. Note that this Lipschitz constant can only be defined for Euclidean search spaces.

We now show that allocating batches by performing a greedy maximisation of  GIBBON  can be interpreted as an LP routine for specific choices of acquisition and penalisation functions. Define a re-scaled GIBBON acquisition function $\alpha_n^{scaled}(\textbf{x})=\left(e^{\alpha_n^{gibbon}(\textbf{x})}\right)^2$ and a penaliser $\psi_{corr}(\textbf{x};\{\textbf{x}_{j}\}_{j=1}^{i-1})=\big|R(\{\textbf{x}_{j}\}_{j=1}^{i-1}\cup\{\textbf{x}\})\big|$ as the determinant of the batch's predictive correlation. After routine algebraic manipulations, we can see that allocating the $i^{th}$ element of the $n+1^{th}$ batch according to a greedy maximisation of our GIBBON acquisition function is equivalently expressed as 
\begin{align}
     \textbf{x}_{n+1,i}=&\argmax_{\textbf{x}\in\mathcal{X}}\alpha_n^{\textrm{GIBBON}}
     \left(\{\textbf{x}\}\cup\{\textbf{x}_{n+1,j}\}_{j=1}^{i-1}\right)\nonumber
     \\=& \argmax_{\textbf{x}\in\mathcal{X}}\alpha^{scaled}_n(\textbf{x})\psi_{corr}(\textbf{x};\{\textbf{x}_{n+1,j}\}_{j=1}^{i-1}), \nonumber
\end{align}
i.e. the predictive correlation term in GIBBON can be interpreted as a form of local penalisation. However, unlike $\psi_{soft}$ and the {hard penaliser of \cite{alvi2019asynchronous}}, $\psi_{corr}$ does not require the estimation of $L$, instead just using the easily accessible predictive correlation of our GP. In fact the superior performance of our proposed approach over existing LP methods suggests that complicated penalisation functions are not needed at all.

\section{The Computational Complexity of Information-theoretic Bayesian Optimisation}
\label{sec:computationalcomplexity}
In this final section before our experimental results, we analyse the computational overhead incurred by GIBBON and compare with all other existing information-theoretic acquisition functions, many of which are included in our experimental results of Section \ref{sec:experiments}.  {We discuss} the complexity of the information-theoretic acquisition functions mentioned in Sections \ref{sec:intro} and \ref{sec:mes}:  Entropy Search \citep[ES]{hennig2012entropy}, Predictive Entropy Search \citep[PES]{hernandez2014predictive} and its extensions PPES \citep{hernandez2017parallel} and MF-PES \citep{zhang2017information}, Max-value Entropy Search \citep[MES]{wang2017max} and its extensions MUMBO \citep{mumbo} and MF-MES \citep{takeno2020multi}, as well as the Fast Information-Theoretic BO of \citet[FITBO]{ru2018fast}. Although MFMES was originally designed for asynchronous batch BO, \cite{takeno2020multi} do discuss (in their Appendix D.4) an alteration that allows the support for synchronous batch BO problems but with large computational cost. It is this variant of MFMES that we consider in this section and for our experimental results (Section \ref{sec:experiments}).

The computational complexity of BO routines is hard to measure exactly as we do not know  \textit{a-priori} how many evaluations are required to maximise the highly multi-modal acquisition function in each inner loop. However, there are two main contributors to the computational cost of information-theoretic BO that can be analysed: a one-off initialisation calculation required to `prepare' the acquisition functions for each separate BO step, and the costs of each acquisition function query required for the inner-loop maximisation. These two complexity contributions are presented in Table \ref{table:complexity}, alongside a summary of the type of extended BO problems supported by each acquisition function, i.e whether they permit noisy, multi-fidelity, batch observations or non-Euclidean search spaces. We now derive the stated complexity results for initialisation and acquisition function query costs.

\begin{table}[t]
\setlength{\tabcolsep}{2pt}
\begin{tabular}{@{}l|lllllll@{}}
\toprule
 Method & Noise? &  \begin{tabular}[c]{@{}l@{}}Multi-\\ Fidelity ?\end{tabular} & 
  Batch? & \begin{tabular}[c]{@{}l@{}}Non-\\ Euclidean ?\end{tabular} &
 \begin{tabular}[c]{@{}l@{}}Initialisation\\ costs\end{tabular} & \begin{tabular}[c]{@{}l@{}}Acquisition\\ query costs\end{tabular} \\ \midrule
ES & $\checkmark$ & $\checkmark$ & $\times$&$\checkmark$&$n^2e^{2d} + e^{3d}$ & $n^2e^{d}$ \\
PES & $\checkmark$ &  $\times$& $\times$&$\times$&$n^2e^{2d}+(n+d)^3e^d$ & $n^2 + (n+d)e^d$ & \\
PPES & $\checkmark$ &  $\times$& $\checkmark$&$\times$&$n^2e^{2d}+(n+d)^3e^d$ & $B^2n^2 + (B^3 + n+d)e^d$ & \\
MF-PES & $\checkmark$ & $\checkmark$ & $\times$& $\times$&$n^2e^{2d}+(n+d)^3e^d$ & $n^2 + (n+d)e^d$ \\
FITBO & $\times$ & $\times$ &$\times$ & $\times$&$1$& $n^2$\\
MES &$\times$ & $\times$ & $\times$&$\checkmark$&$n^2e^d$ & $n^2$ \\
MUMBO & $\checkmark$ & $\checkmark$ &$\times$ &$\checkmark$&$n^2e^d$ & $n^2$ \\
MF-MES & $\checkmark$ & $\checkmark$ & $\checkmark$ &$\checkmark$&$n^2e^d$ & $B^2n^2+B^3+B^2$  \\
GIBBON & $\checkmark$ & $\checkmark$ & $\checkmark$&$\checkmark$&$n^2e^d$ & $B^2n^2+B^3$ \\
\bottomrule
\end{tabular}
\caption{Computational complexity of existing entropy-based acquisition functions. $d$ denotes the dimensions of the search space, $n$ is the number of observations already collection, and $B$ denotes batch size. Complexity results are correct to highest order terms only and ignore constant factors. We also summarise the types of BO problems supported by these acquisition functions (columns 1-4). {For example, although standard MES's calculations strategy assumes exact, single-fidelity and purely sequential evaluations, MES does support non-Euclidean search space.}}
\label{table:complexity}
\end{table}

\subsection{Acquisition Function Initialisation Costs}

All BO routines incur a computational cost at the start of each individual BO step through the fitting of the surrogate model.  The primary contribution to the cost of fitting a GP surrogate model on $n$ data points is an $n\times n$ matrix inversion, i.e an $O(n^3)$ computation. Extracting a single predictive mean or co-variance from this GP then costs $O(n)$ and $O(n^2)$, respectively. As the overhead of fitting the GP is incurred across all BO routines, we leave out its contribution from our complexity analysis. We instead focus purely on the initialisation overheads specific to each information-theoretic acquisition function incurred when collecting sets of samples required for their approximation strategies. This set is reused for all acquisition function evaluations during a single inner-loop maximisation but re-sampled for each BO step.

All the samples required for information-theoretic acquisition functions can be separated into two distinct classes --- those approximating single-dimensional quantities and those approximating quantities with the same dimensions as the search space. To paint a clear picture of computational cost, we consider BO problems with a search space of fixed dimension $d$ and focus primarily on how the costs scale with respect to $d$, the batch size $B$, and the number of previously queried points $n$. Although all sample sizes are user-controllable, the efficiency of the resulting acquisition function depends sensitively on appropriately large sample sizes (as demonstrated for PES and MES by \cite{wang2017max}). Therefore, sample sizes used when approximating $d$-dimensional quantities must grow exponentially as $O(e^d)$ in order to preserve approximation accuracy. In contrast, the sample sizes required for effective approximations of single dimensional quantities can be chosen independently of $d$ and so are denoted as $O(1)$ in our complexity analysis.

As discussed in Section \ref{sec:mes}, MES-based acquisition functions (including GIBBON), uses a Gumbel sampler to access samples of the  maximum value $g^*$. This sampler evaluates our GP surrogate model's posterior (at $O(n^2)$ cost) across $O(e^d)$ points to form a discretisation of the $d$-dimensional search space. Each of the required $O(1)$ samples of $g^*$ (a single dimensional quantity) can then be extracted with $O(1)$ cost, yielding an overall complexity of  $O(n^2e^d)$. As shown in Table \ref{table:complexity},  GIBBON's initialisation costs are substantially lower than those of the acquisition functions based on PES and ES. Only FITBO has a lower initialisation cost, however it has not seen widespread use as it  supports only noiseless standard BO tasks and employs a complicated construction requiring linear approximations of non-central $\chi^2$ process (operations not supported by GP libraries). For the ES and PES-based acquisition functions, which require samples from the $d$-dimensional objective function maximiser $\textbf{x}^*$,  initialisation costs are substantial. 

In ES, each sample of $\textbf{x}^*$ is the maximum of a sample function drawn from the GP across an $O(e^d)$ discretisation of the search space. Simulating these function draws requires a one-off $O(e^{3d})$ computation for the Cholesky factor of the predictive co-variance matrix evaluated across the discretisation, as accessed with an $O(n^2)$ cost for each of its $O(e^{2d})$ elements.  Consequently, the initialisation of ES incurs a sizeable $O(n^2e^{2d}+e^{3d})$ complexity scaling. PES also requires samples of $\textbf{x}^*$ but instead maximises the sample draws from a finite feature approximation of the GP surrogate model \citep{rahimi2008random}, requiring just an $O(n^2)$ cost for each of the required $O(e^d)$ samples. However, unlike ES, PES incurs the additional cost of pre-computing an $n+d$-dimensional matrix inversion for each sample. Therefore, PES has a total initialisation cost of $O(n^2e^d +(n+d)^3e^d))$. Note that the finite feature approximation employed by PES and its variants is only rigorously defined for GPs with stationary kernels and Euclidean search spaces.

\subsection{Acquisition Function Query Costs}
We now discuss the computational complexity of each individual acquisition function query. As highlighted in Table \ref{table:complexity}, not only does the GIBBON acquisition function match the lowest query costs attained by any information-theoretic acquisition functions, but it is suitable for standard, stochastic, multi-fidelity and batch optimisation.

To calculate GIBBON and the other MES-based acquisition functions, we require the joint predictive distribution across $B$ proposed batch locations. Accessing these $B^2$ predictive co-variance terms from a GP surrogate model and then taking its determinant cost $O(B^2n^2)$ and $O(B^3)$, respectively. Finally, GIBBON calculates an analytical expression for each of the $O(1)$ samples from $g^*$ and across each of the batch elements, yielding an overall complexity of $O(B^2n^2+B^3)$. MF-MES has a similar construction to GIBBON, but requires the additional calculation of a $B$-dimensional integral, for which a naive numerical approximation would require an $O(e^B)$ cost. {Following \cite{takeno2020multi}, this integral can also be evaluated using a sophisticated recursive strategy for calculating multi-variate Gaussian cumulative density functions with $O(B^2)$ cost, however, we found this routine to incur a large constant overhead that dominated our acquisition function calculations.
Similarly, we stress that although all MES-based acquisition functions have $O(n^2)$ cost (in the non-batch setting), FITBO, MUMBO and MF-MES all require additional numerical integrations (over GIBBON) that incur a significant constant cost factor that does not show in our highest order complexity analysis. }  Consequently, the experiments of Section \ref{sec:experiments} show that GIBBON is substantially cheaper than MUMBO and MF-MES in practice.

The ES and PES-based acquisition functions incur a substantially larger query cost than GIBBON. Their primary computational bottleneck is the requirement of separate calculations for each of their $O(e^d)$ samples of $\textbf{x}^*$. In ES, each evaluation requires an $n^2$ prediction from the GP for each location across a small $O(1)$-sized collection of points for each sampled $\textbf{x}^*$. In contrast, PES requires only a single prediction from the GP but additional $O(n+d)$ manipulations for each of its  $O(e^d)$ pre-computed kernel matrices. For batch BO, PPES requires $B^2$ GP predictions and a $B^3$ calculation to access the determinant of the batch's posterior co-variance, as well as an additional $B^3$ determinant calculations for each pre-computed kernel matrix.

\section{Experiments}
\label{sec:experiments}

We now finish this manuscript with a comprehensive empirical evaluation of our GIBBON acquisition function.  In particular, we consider  batch (Section \ref{subsec:batch}) and multi-fidelity (Section \ref{subsec:MF}) synthetic benchmarks, as-well as well as a molecular design loop over a non-Euclidean and highly-structured search space (Section \ref{subsec:BOSH}). Finally, we examine the performance of GIBBON when inserted into a challenging real-world BO framework that requires both batch and multi-task decision making. Implementations of GIBBON are available in three popular Python libraries for BO: Emukit  \citep{emukit2019}, BoTorch \citep{balandat2020botorch} and Trieste \citep{trieste} .

For clarity, all of our experiments follow a similar format. We run each of the considered BO methods across 50 random seeds, plotting mean performance and a single standard error. For batch algorithms, we count the evaluation of a batch as a single BO iteration. Suboptimality of the current believed optimum $\hat{\textbf{x}}$ is measured by the regret $g(\textbf{x}^*)-g(\hat{\textbf{x}})$, where $\textbf{x}^*$ is the true maximiser. For some experiments we also measure the time taken to choose the next query points (referred to as the optimisation overhead). This computational cost of performing BO includes fitting the GP surrogate model as well as initialising and maximising the acquisition function. All experiments reporting optimisation overheads were performed on a quad core Intel Xeon 2.30GHz processor. 

Across all our experiments, we see the same general behaviour:  GIBBON at least matches, and often exceeds, the performance of existing high-performance acquisition functions whilst incurring an order of magnitudes lower computational overhead. Moreover, the breadth of our experiments showcases that GIBBON is truly a general-purpose acquisition function, forming {a} computationally light-weight acquisition function suitable for standard BO extensions, batch high-cost string design problems and sophisticated synchronous batch multi-task BO frameworks.

Overall, the purpose of our experiments is to demonstrate how GIBBON performs relative to other BO acquisition functions, with a primary focus on existing MES-based approaches. For completeness, we also compare against a range of additional methods, chosen to reflect their popularity, code availability and suitability for the particular experiment. To this end, we compare GIBBON with all the acquisition functions supported by BoTorch and Emukit, as-well as our own implementations of the batch heuristics discussed in Section \ref{sec:batch}. We will introduce these competitors alongside the relevant empirical results. Unfortunately, the PES-based methods discussed in Section \ref{sec:computationalcomplexity} do not have implementations in BoTorch or Emukit. Moreover, we could not find any other comparable maintained software implementations, likely due to demonstrably worse performance of PES than MES \citep[as shown by][]{wang2017max} and PES's difficult-to-implement subroutines (Section \ref{sec:computationalcomplexity}).

\subsection{Standard and Batch Optimisation}
\label{subsec:batch}

For our first set of experiments, we consider a set of synthetic functions provided with the BoTorch package. In particular, we recreate two of the experiments of \cite{balandat2020botorch} by maximising the Hartmann ($d=6$) and Ackley functions ($d=4$), each with observations perturbed by centred Gaussian noise with a variance of $0.25$. In addition, we also consider the Shekel function ($d=4$) under exact observations. For details of these synthetic functions, we refer readers to Appendix \ref{appendix:synth}. Following the setup of \citet{balandat2020botorch}, we initialise all routines by evaluating $2d+2$ random locations, refit our GP's kernel parameters after each BO step, and choose the current believed optimum $\textbf{x}^*$ by maximising the posterior mean of the GP surrogate model. For each experiment, we separately consider purely sequential BO ($B=1$) and batch BO ($B=5$), recording the evaluation of the whole batch as a single optimisation step.
 
{ For all our experiments, we report the performance of GIBBON and Expected Improvement (EI), as well as standard MES (applied to noisy problems by assuming exact observations)}. In addition, we also ran the acquisition functions already supported in BoTorch, i.e Knowledge Gradient (KG), Noisy Expected Improvement (NEI) \citep{picheny2010noisy}, and MFMES (the multi-fidelity MES extension of \cite{takeno2020multi}, used here to support noisy observations). We stress that MFMES was designed to provide computationally light-weight asynchronous batch BO and we will see that its adaptation to synchronous problems (as implemented by BoTorch and discussed earlier in Sections \ref{sec:mes} and \ref{sec:computationalcomplexity}) incurs a substantial computational overhead. For our batch problems, we also implemented BoTorch versions of Local Penalisation (LPEI) and the DPP heuristic (DPPEI) of \cite{kathuria2016batched}, both using EI as their base acquisition function (as considered by \cite{gonzalez2016batch} and \cite{kathuria2016batched}). In addition, we also provide local penalisation with an MES base acquisition function (LPMES), a combination not tested by \cite{gonzalez2016batch} but found to be particularly effective in our experimentation. All MES-based acquisition functions (including GIBBON) use $5$ max-values sampled from a Gumbel distribution fit to surrogate model predictions at $10,000*d$ random locations and are re-sampled for each BO step. All other implementation parameters follow the BoTorch defaults. 

For acquisition function maximisation we use BoTorch's gradient-based maximiser. However, as this inner-loop maximisation can be challenging since it corresponds to a highly multi-modal maximisation across a  $B\times d$-dimensional space. Therefore most batch BO routines build batches greedily by breaking batch design into $B$ separate $d$-dimensional maximisations. Consequently, for all approaches (including our GIBBON acquisition function) except KG , batches are constructed in this greedy manner with a maximisation budget of $10*d$ random restarts for each element of the batch. Although KG is able to jointly allocate batches, its large computational cost restricted us to $20$ restarts (the amount recommended by the BoTorch authors).

\begin{figure}
\centering
\subfloat[Noiseless Shekel ($d=4$, $B=1$)]{\label{noiseless}
    \includegraphics[height=100pt]{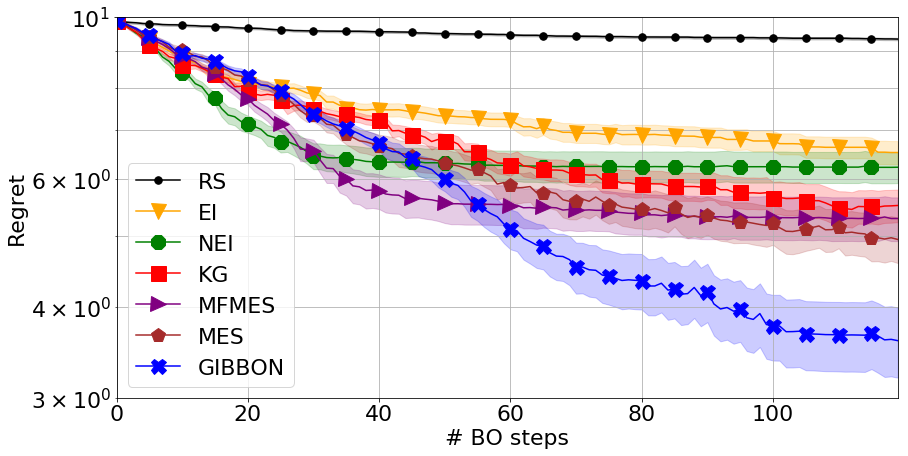}}
\subfloat[Noiseless Shekel ($d=4$, $B=5$)]{
    \includegraphics[height=100pt]{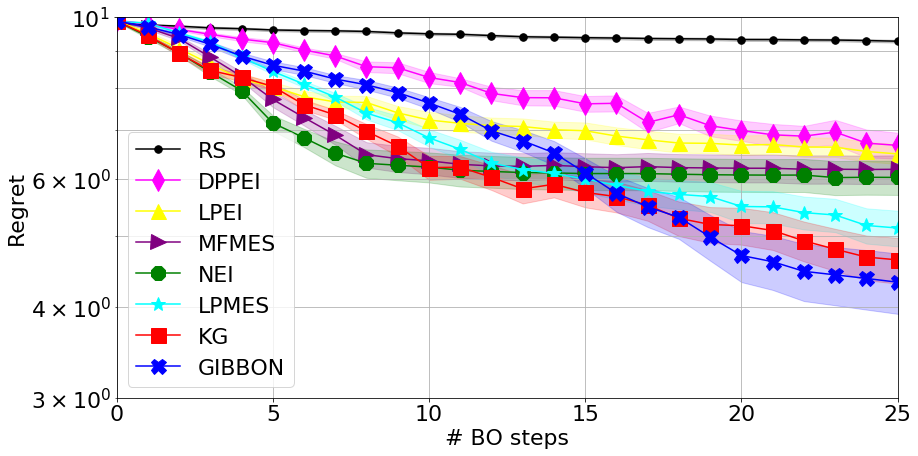}}
\\
\subfloat[Noisy Ackley ($d=4$, $B=1$)]{
    \includegraphics[height=100pt]{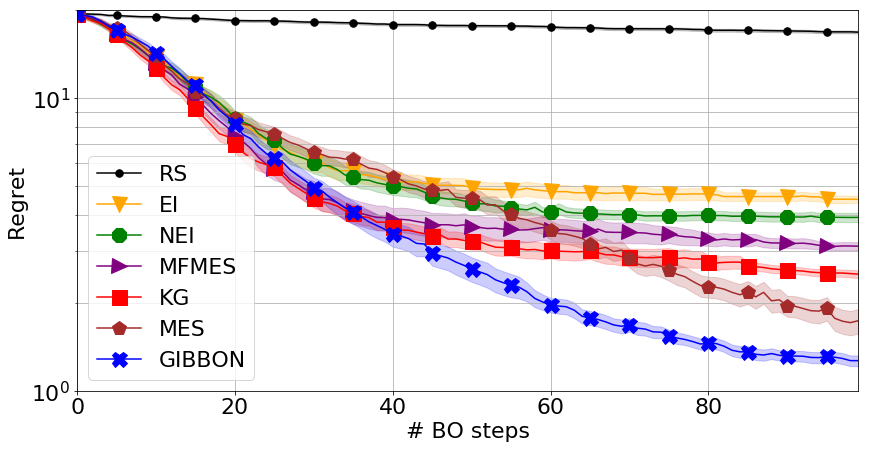}}
\subfloat[Noisy Ackley ($d=4$, $B=5$)]{
    \includegraphics[height=100pt]{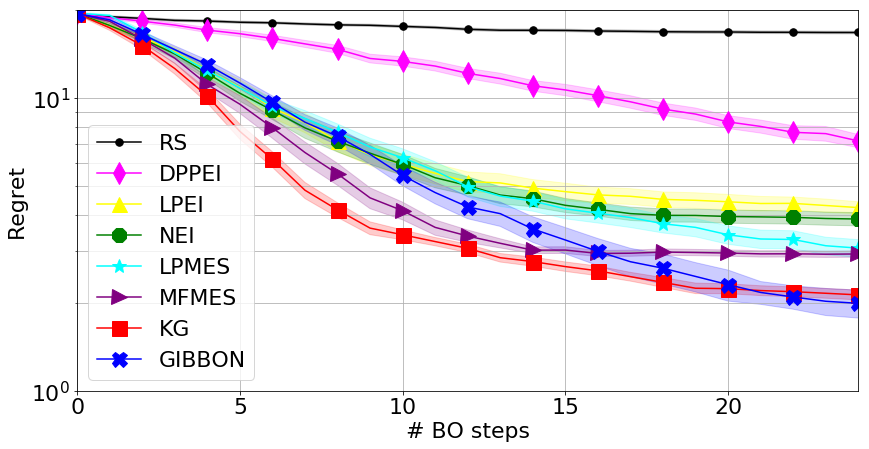}}
\\
\subfloat[Noisy Hartmann ($d=6$, $B=1$)]{
    \includegraphics[height=100pt]{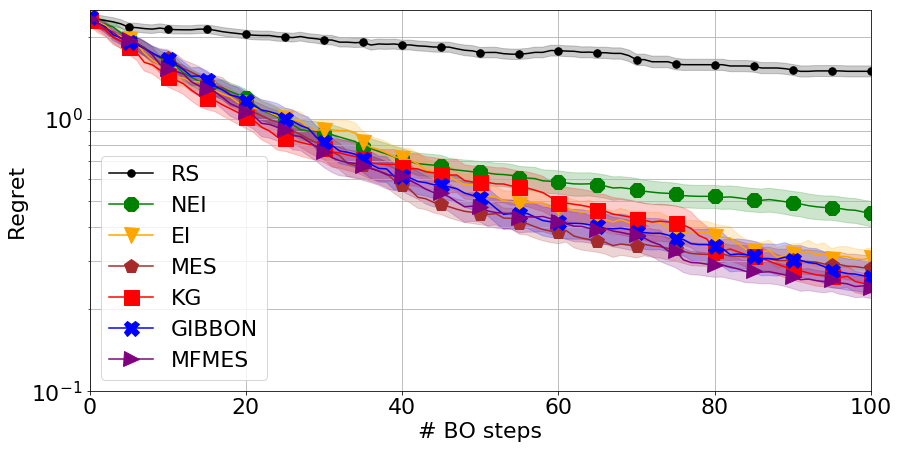}}
\subfloat[Noisy Hartmann ($d=6$, $B=5$)]{
    \includegraphics[height=100pt]{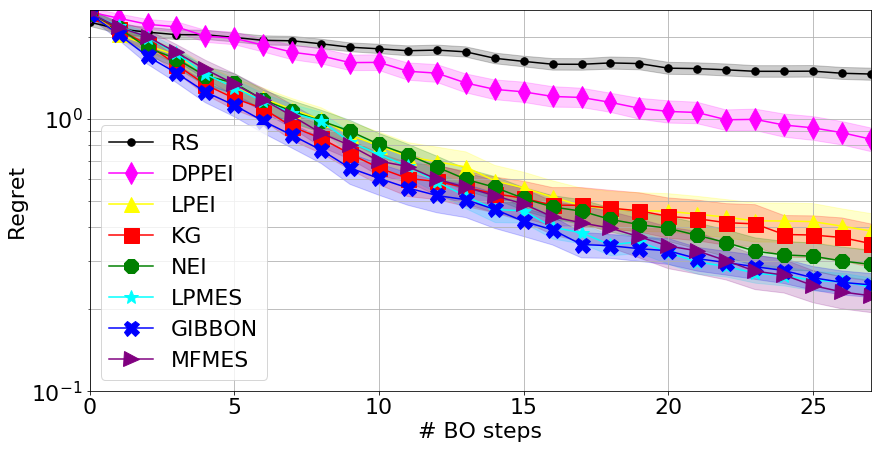}}
\caption{Optimisation of synthetic benchmark functions. GIBBON provides efficient and high-precision optimisation, matching or exceeding the performance of existing approaches.}
\label{fig:synth}
\end{figure}

Across the three synthetic experiments (Figure \ref{fig:synth}) we see that GIBBON provides efficient high-precision optimisation, yielding small regret in competitively few iterations for both sequential and batch BO. {Note that in the noiseless and purely sequential case (Figure \ref{noiseless}), although MFMES can be shown to collapse exactly to the acquisition function of MES, the performance of these two methods differ as BOTorch's implementation of MFMES still relies on numerical approximations (albeit with a a very small observation noise term). Moreover, MFMES's reliance on rough numerical approximations means that its acquisition function struggles to provide high-precision optimisation once the acquisition function values are sufficiently small (i.e towards the end of the optimisation). Consequently, although sometimes achieving fast initial optimisation, MFMES fails to achieve as small final regret as GIBBON.} Surprisingly, GIBBON is able to outperform even standard MES in the noiseless optimisation task of Figure \ref{noiseless}, the scenario for which standard MES is exact. As GIBBON approximates MES, we expected it to perform strictly worse for this example. We delve deeper into this phenomenon in Appendix \ref{appendix:MES}.

Of particular note is the order of magnitude smaller overhead incurred by GIBBON over the other high-performing acquisition functions (NEI, KG and MFMES) as summarised in Table \ref{tab:B=1} (for $B=1$) and Table \ref{tab:B=5} (for $B=5$). In particular, batch KG incurs at least a $10$ times larger overhead than GIBBON. Moreover, Figure \ref{fig:overhead} shows that, while the computational overhead of batch KG, MFMES and NEI increase substantially as the optimisation progresses, { GIBBON's overhead settles to fixed cost. We hypothesise that the initial (small) rise in the computational overhead of GIBBON is caused due to early acquisition functions having wider modes that require more local optimisation steps, a property also likely shared by other acquisition functions but disguised by their growing acquisition function cost.  Although MFMES and GIBBON share the same order complexity with respect to the number of BO steps (see Table \ref{table:complexity}), we see that the large cost of numerical integration renders MFMES significantly more expensive than GIBBON in practice. Moreover, the BoTorch implementation of synchronous batch MFMES employs multiple model fits within each batch allocation to ensure approximation accuracy and so its cost scales poorly with the number of optimisation steps.}

Figure \ref{fig:pareto} confirms our earlier claim that GIBBON is indeed a high-performance yet computationally light-weight acquisition function, showing that GIBBON performs better than all competing acquisition functions while incurring a computational overhead only slightly worse than the simple but low-performance approaches.

\begin{table}
    \centering
\subfloat[Computational overheads for sequential BO ($B=1$).]{
    \begin{tabular}{l|l|l|l}
\hline
 & \multicolumn{3}{l}{Computational Overhead (seconds 1 d.p.)} \\ \cline{2-4} 
 & Shekel (d=4) & Ackley (d=4) & Hartmann (d=6) \\ \hline
EI & 0.2 ($\pm$0.0) & 0.2 ($\pm$0.1) & 0.8 ($\pm$0.1) \\
MES & 0.5 ($\pm$0.1) & \textbf{0.5 ($\pm$0.0)} & 1.0 ($\pm$0.1) \\
NEI & 3.5 ($\pm$0.3) & 3.0 ($\pm$0.2) & 8.9 ($\pm$0.7) \\
MFMES & \textbf{3.0 ($\pm$0.4)} & 0.7 ($\pm$0.1) & 4.5 ($\pm$0.2) \\
KG & 13.0 ($\pm$0.8) & 22 ($\pm$1.0) & \textbf{66.6 ($\pm$4.6)} \\
GIBBON & \textbf{0.6 ($\pm$0.1)} & \textbf{0.8 ($\pm$0.1)} &\textbf{ 1.5 ($\pm$0.1)} \\ \hline
\end{tabular}
    \label{tab:B=1}}
\\
\subfloat[Computational overheads for batch BO ($B=5$)]{  
\begin{tabular}{l|l|l|l}
\hline
 & \multicolumn{3}{l}{Computational Overhead (seconds 1 d.p.)} \\ \cline{2-4} 
 & Shekel (d=4) & Ackley (d=4) & Hartmann (d=6) \\ \hline
DPPEI & 0.8 ($\pm$0.1) & 0.8 ($\pm$0.0) & 1.2 ($\pm$0.0) \\
LPEI & 1.4 ($\pm$0.2) & 2.3 ($\pm$0.1) & 2.9 ($\pm$0.1) \\
LPMES & 2.9 ($\pm$0.1) & 3.3 ($\pm$ 0.1)  & 3.5 ($\pm$ 0.1)  \\
NEI & 21.3 ($\pm$1.8) & 23.4 ($\pm$0.6) & 43.0 ($\pm$2.6) \\
MFMES & 24.4 ($\pm$2.3) & 26.7 ($\pm$0.6) & \textbf{38.6 ($\pm$1.9)} \\
KG & \textbf{58.1 ($\pm$4.4)} & \textbf{53.0 ($\pm$3.1)} & 103.4  ($\pm$6.2) \\
GIBBON & \textbf{5.0 ($\pm$0.5)} & \textbf{5.8 ($\pm$0.7)} & \textbf{13.3 ($\pm$1.3)} \\ \hline
\end{tabular}
    \label{tab:B=5}}
    \caption{Computational overheads for the synthetic benchmarks of Figure \ref{fig:synth} averaged over the whole optimisation run. The two algorithms achieving lowest regret for each task are highlighted, demonstrating that GIBBON at least matches the overhead of other high-performing sequential acquisition functions and incurs a significantly lower overhead than other batch high-performing acquisition functions. }
\end{table}

\begin{figure}
\centering
\subfloat[B=1]{
    \includegraphics[height=100pt]{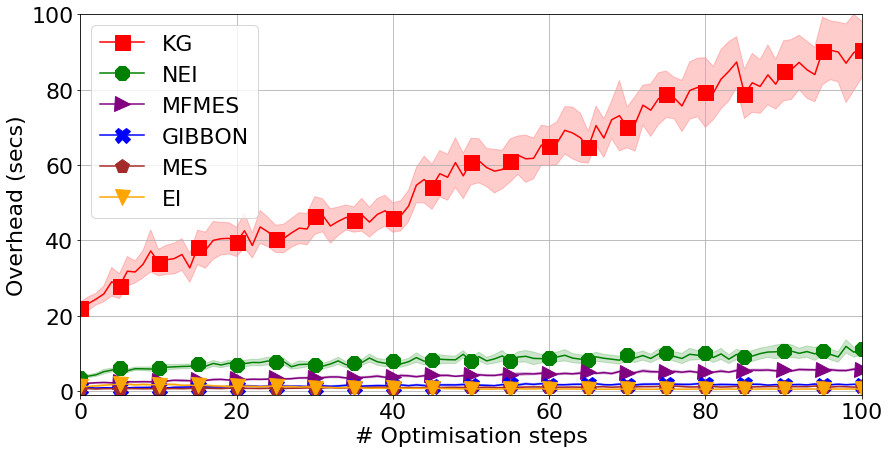}}
\subfloat[B=5]{
    \includegraphics[height=100pt]{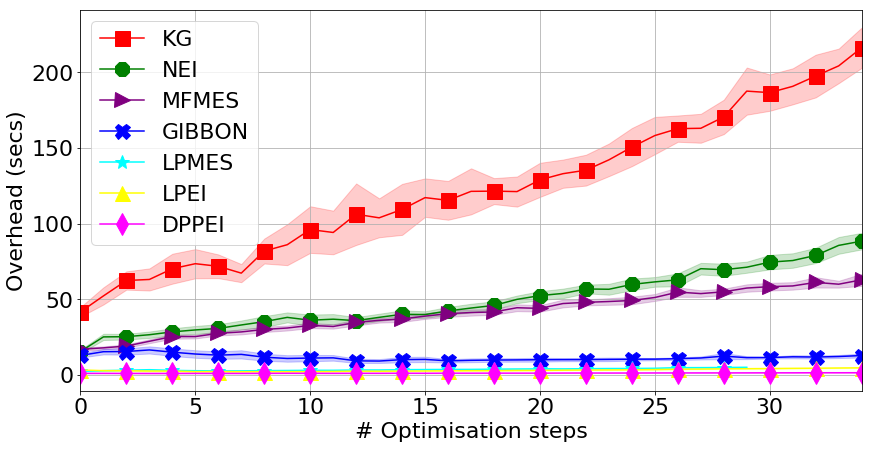}}
\caption{The computational overheads incurred while optimising the Hartmann function. GIBBON's costs remains low throughout the optimisation, whereas the other high-performing batch acquisition functions costs increase dramatically as the optimisation progresses.}
\label{fig:overhead}
\end{figure}

\begin{figure}
\centering
\subfloat[B=1]{
    \includegraphics[width=0.48\textwidth]{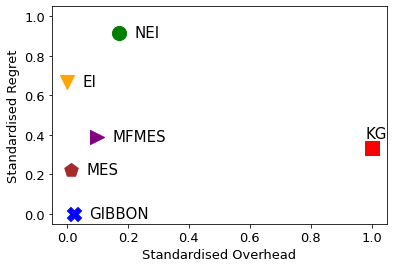}}
\subfloat[B=5]{
    \includegraphics[width=0.48\textwidth]{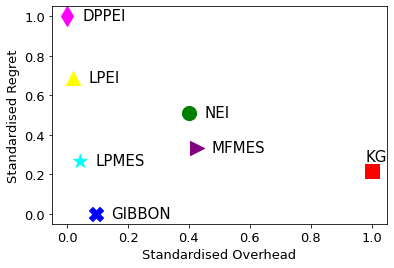}}
\caption{Comparison of the final regret achieved by each BO method with their computational overheads. Scores are standardised to sit within $[0,1]$ and averaged across the three synthetic benchmark tasks.  Lower scores on the x and y axis represent a smaller computational overheads and more effective optimisation, respectively.  }
\label{fig:pareto}
\end{figure}

\subsection{Ablation Study}
\label{subsec:ablation}

Before assessing GIBBON across a wider range of BO tasks, we now perform a brief ablation study into GIBBON's user-controllable parameters and how they affect performance on the noisy Hartmann function introduced above. In particular, we focus on batch size ($B$) and sensitivity to the quality of max-value samples used to calculate GIBBON.

\subsubsection{GIBBON for large batch optimisation}
\begin{figure}
\centering
\subfloat[GIBBON]{
    \includegraphics[height=100pt]{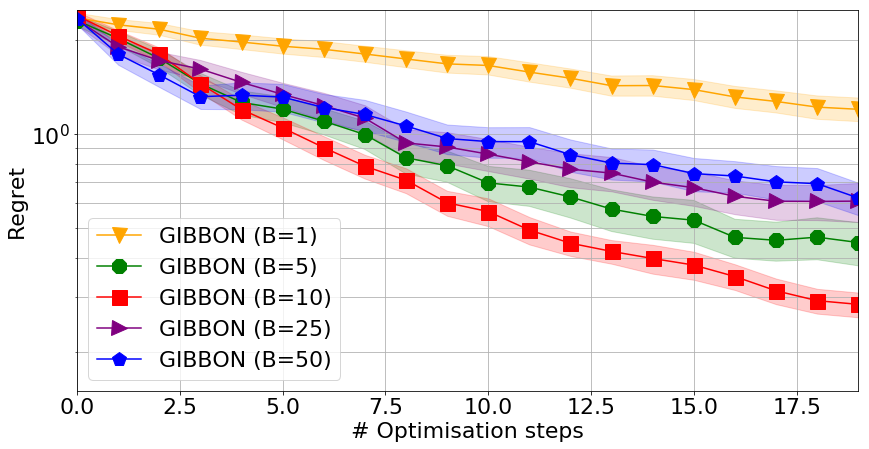}}
\subfloat[LPEI]{
    \includegraphics[height=100pt]{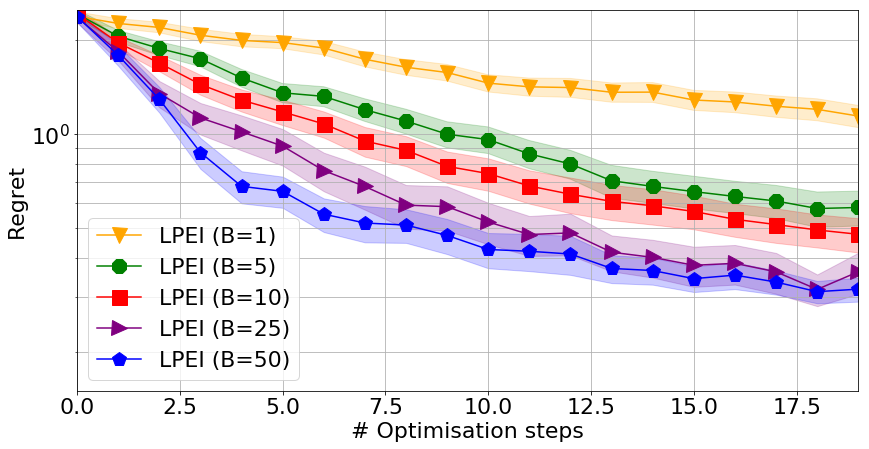}}
\caption{Optimisation of the noisy Hartmann function over 20 iterations across a range of batch sizes. GIBBON is able to provide effective batch optimisation for small to moderate batch size ($B<25$), however, fails to effectively control large batches. Although LPEI is able to leverage larger parallel resources than GIBBON, it still fails to match the performance of GIBBON (B=10) even when controlling much larger batches (B=50).}
\label{fig:batch}
\end{figure}

GIBBON is a promising candidate for optimising under a large degree of parallelism as its batches can be constructed greedily without requiring $B$ posterior updates. Unfortunately, GIBBON fails to realise this promise in practice. Figure \ref{fig:batch} shows that GIBBON fails to effectively leverage large parallel resources and even displays a significant drop in performance once considering batches of size $25$. In contrast, LPEI is able to continually improve regret by considering larger and larger batches. We stress that LPEI, even when controlling batches of $50$ elements (i.e. $1,000$ total evaluations), still achieves lower regret than GIBBON with batches of size $10$ (i.e $200$ evaluations). 

As demonstrated in Appendix \ref{appendix:bigbatch}, GIBBON can be easily modified to support optimisation under large batches by a simple down-weighting of its repulsion term. Therefore, we posit that poor performance of GIBBON in this large batch setting is due to a degradation of the approximation accuracy in our analytical lower bound as we increase batch size. To see this, consider GIBBON's diversity-quality decomposition  first introduced in Section \ref{sec:batch} (i.e. Equation (\ref{decomp})).  Considering large batches ensures that at least some candidate elements must be close together and so have high correlation. Consequently, GIBBON is dominated by its repulsion term (the determinant of the batch's predictive correlation matrix) and the maximisation of GIBBON leads to repeated query points around the edge of the search space, resulting in a substantial degradation in the stability of our GP surrogate model and poor exploration in more important areas of the space. Therefore, in this large batch setting, GIBBON effectively collapses to an almost pure exploration DPP-based method similar to the poorly performing DPPEI examined in our synthetic experiments.

\subsubsection{GIBBON with Thompson-sampled maximum values}

Our proposed calculation strategy for GIBBON requires access to $M$ samples from the objective function's currently unknown maximum value $g^*$. We now investigate the sensitivity of GIBBON with respect to the quality of these random samples. For all the other experiments in this work we used the low-cost but approximate Gumbel sampler, as proposed by \cite{wang2017max}. By approximating the empirical CDF of $g^*$ with an analytical Gumbel distribution, Gumbel sampling is able to return $M$ approximate max-value samples over a grid of $N$ candidate locations with cost $O(M+N)$. Of course, we can access exact samples of $g^*$ by maximising sample functions drawn from our GP (i.e a Thompson-sampling style approach). However, extracting $M$ such exact samples incurs an $O(MN + N^3)$ cost and so, if used as part of GIBBON's calculation strategy, would add significantly to GIBBON's optimisation overhead. However, as using exact max-value samples removes the only  source of approximation in GIBBON aside from our information-theoretic lower bound, this alternative Thompson sampling strategy may lead to improved optimisation --- a hypothesis we now investigate.

\begin{figure}
\centering
\subfloat[Regret (B=1)]{
    \includegraphics[height=100pt]{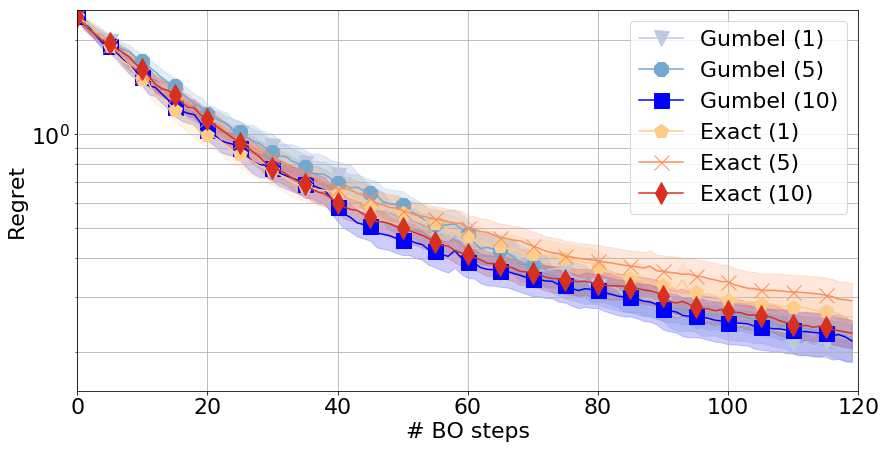}}
\subfloat[Computational Overhead (B=1)]{
    \includegraphics[height=100pt]{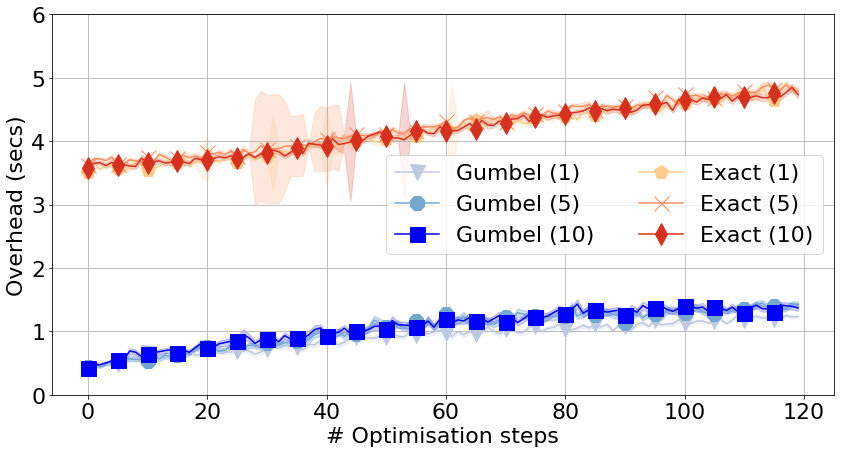}}
    
\subfloat[Regret (B=5)]{
    \includegraphics[height=100pt]{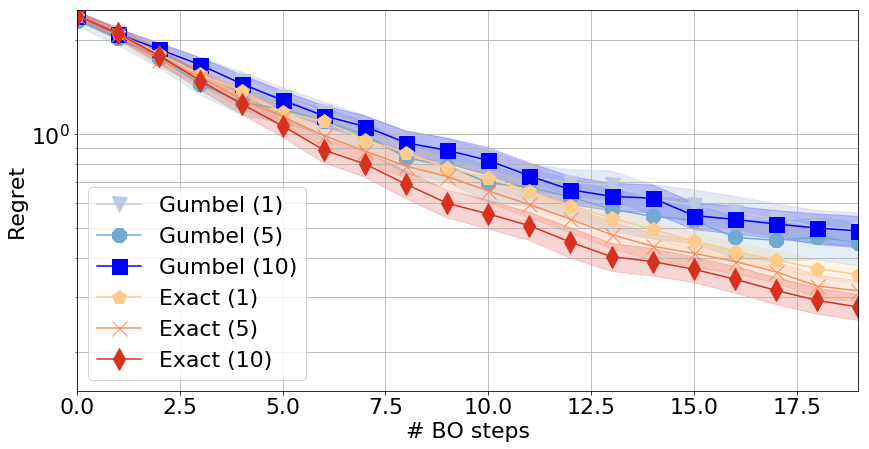}}
\subfloat[Computational Overhead (B=5)]{
    \includegraphics[height=100pt]{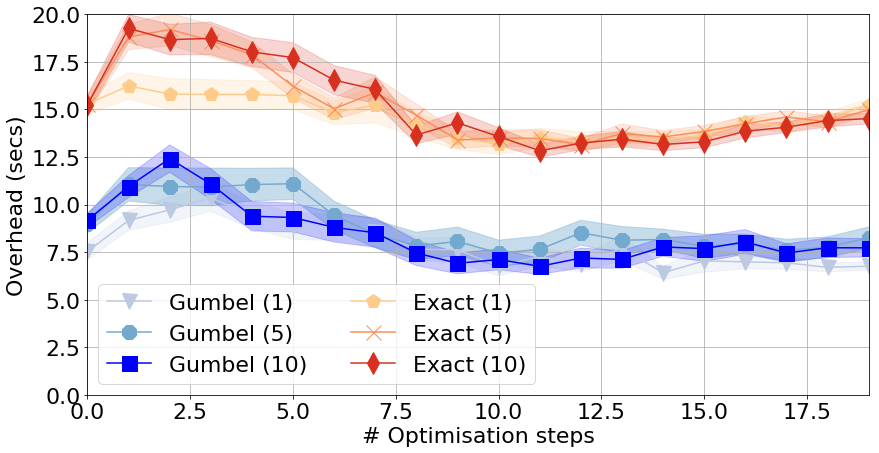}}
\caption{Regret performance (left) and computational overhead (right) of GIBBON when optimising the noisy Hartmann function using batches of size 1 (top row) or 5 (bottom row) across different max-value sampling strategies. Although exact sampling provides a small boost in GIBBON's performance for the batch experiment, Gumbel sampling seems adequate for sequential optimisation. Moreover, the resulting computational overhead of GIBBON with exact sampling is five and three times the cost of GIBBON with Gumbel sampling, when controlling batches of size $B=1$ and $B=5$, respectively.}
\label{fig:samples}
\end{figure}

In Figure \ref{fig:samples}, we present the performance of GIBBON when using $1$, $5$
 and $10$ approximate (Gumbel) or exact (Thompson) sampled maximum values. Due to the significant cost of the exact Thompson sampler, we can sample over only $1,000*d$ random candidate locations, as opposed to the $10,000*d$ used for our Gumbel sampler. We see that, in exchange for a large increase in computational overhead, the exact sampler can sometimes lead to a small increase in performance over our standard Gumbel-based batch GIBBON implementation. We stress that changing sampler had no effect on the performance of purely sequential ($B=1$) BO. This small and inconsistent performance improvement is not enough to justify the additional overhead of exact sampling and so, in order to remain loyal to our motivation of GIBBON as a computationally light acquisition function, we continue using the Gumbel sampler for all our remaining experiments. Investigating alternative sampling strategies to use within GIBBON is an important area of future work.

\color{black}
\subsection{Multi-fidelity Optimisation}
\label{subsec:MF}

We now turn to multi-fidelity optimisation, where the current state-of-the-art acquisition functions are the effectively equivalent MUMBO \citep{mumbo} and MFMES \citep{takeno2020multi} acquisition functions. \cite{mumbo} demonstrates comprehensively that MUMBO outperforms a wide range of existing multi-fidelity acquisition functions, including the entropy search-based approach of \cite{swersky2013multi}, the upper-confidence bound variants of \cite{kandasamy2016gaussian} and \cite{kandasamy2017multi}, as well as extensions of EI \citep{huang2006global} and KG \citep{wu2016parallel}. Therefore, to test GIBBON's multi-fidelity optimisation capabilities, it is sufficient to compare with MUMBO. To this end, we provide an implementation of GIBBON for the Emukit Python library and recreate exactly the synthetic experiments from Figure 2 of \cite{mumbo}. These experiments consider popular synthetic multi-fidelity benchmarks with discrete fidelity spaces consisting of between 2 and 4 fidelity levels (each with differing query costs) and search space dimensions ranging from 2 to 8 dimensions (see Appendix \ref{appendix:synthMF} for the analytical forms of these synthetic benchmarks). In these experiments, we use the linear multi-fidelity GP model of \cite{kennedy2000predicting} as our surrogate model, initialise the GP with a random sample of $2*d$ points queried across all fidelity levels, and fit the GP's kernel parameters to maximise model marginal likelihood after each BO step.

Figure \ref{discretepics} shows that GIBBON provides at least as effective optimisation as MUMBO and Table \ref{MF-overhead} shows that GIBBON has a significantly lighter computational overhead. To provide context for the high performance and low overhead of GIBBON we also present the performance of EI and MES when restricted to just querying the true objective function (i.e no access to low-fidelity observations) and  the performance of the ES acquisition function, used to perform multi-fidelity optimisation by \cite{swersky2013multi}. Although the difference in overhead between MUMBO and GIBBON decreases as we consider higher-dimensional search spaces (primarily due to the growing cost of the Gumbel sampler used by both approaches), the difference in achieved regret increases in GIBBON's favour.

\begin{figure}
\centering
\subfloat[Maximisation of the 2D Currin function (2 fidelity levels with evaluation costs 10 and 1).]{
    \includegraphics[width=0.48\textwidth]{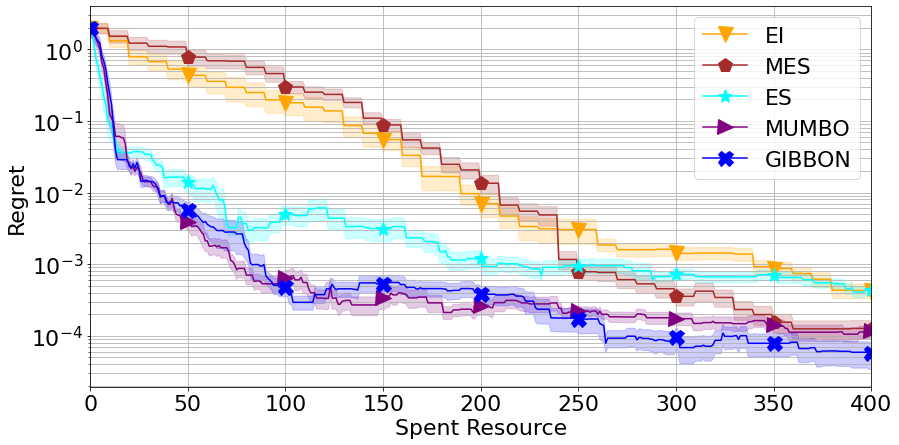}}
\subfloat[Minimisation of 3D Hartmann function (3 fidelity levels with evaluations costs 100, 10 and 1).]{
    \includegraphics[width=0.48\textwidth]{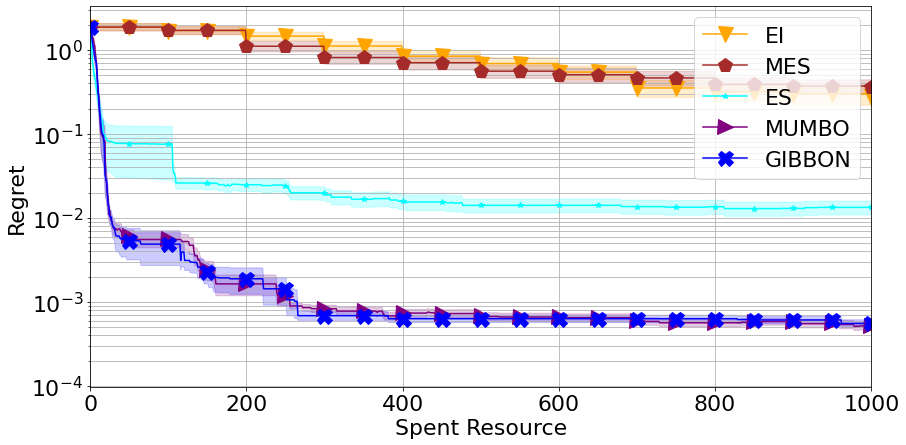}}
\\
\subfloat[Minimisation of 6D Hartmann function (4 fidelity levels).]{
    \includegraphics[width=0.48\textwidth]{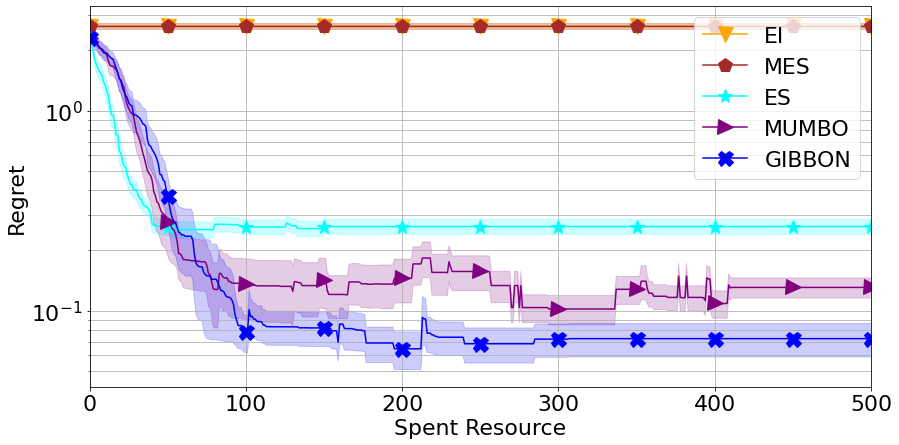}}
\subfloat[Maximisation of the 8D Borehole function (2 fidelity levels with evaluation costs 10 and 1).]{
    \includegraphics[width=0.48\textwidth]{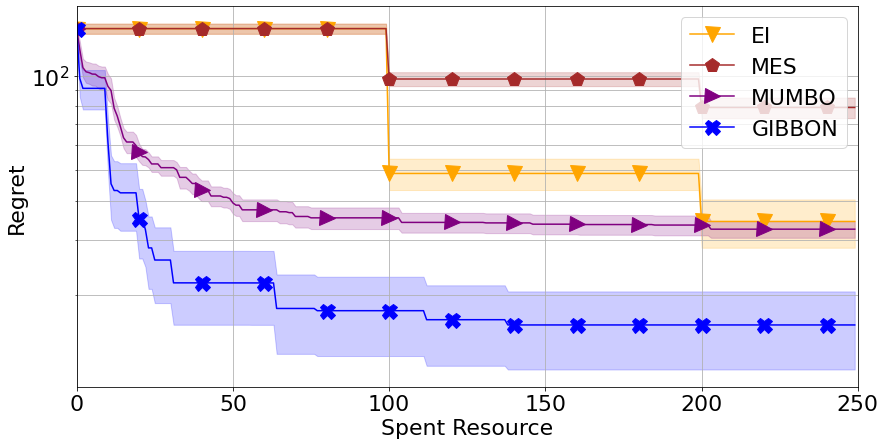}}
 \caption{GIBBON provides high-precision multi-fidelity optimisation with low computational overheads across a range of synthetic multi-fidelity benchmarks. Due to the high-cost of { ES}, we were not able to run it on the higher-dimensional Borehole task. As is standard in multi-fidelity optimisation, the x-axis for these results measures the resources spent on function evaluations (rather than raw BO steps).}
        \label{discretepics}
\end{figure}

\begin{table}
\centering
\begin{tabular}{l|l|l|l|l}
\hline
 & \multicolumn{4}{l}{Overhead for Multi-fidelity Optimisation (Seconds 1 d.p.)} \\ \cline{2-5} 
 & Curin (d=4) & Hartmann (d=3) & Hartmann (d=6) & Borehole (d=8) \\ \hline
ES & 16.6 ($\pm$0.7) & 59.7 ($\pm$4.2) & 229.8 ($\pm$15.3) & - \\
MUMBO & 13.7 ($\pm$0.6) & 18.6 ($\pm$1.0) & 79.9 ($\pm$6.2) & 51.5 ($\pm$7.5)  \\
GIBBON & \textbf{4.0 ($\pm$0.2)} & \textbf{9.9 ($\pm$0.7)} & \textbf{50.2 ($\pm$4.0)} & \textbf{46.1 ($\pm$7.5)} \\ \hline
\end{tabular}
\caption{Computational overheads of the multi-fidelity synthetic benchmarks of Figure \ref{discretepics}. GIBBON enjoys the lowest overheads for all the tasks (as highlighted in bold), often less than half those of MUMBO.}
\label{MF-overhead}
\end{table}

\subsection{Batch Molecular Search}
\label{subsec:string}

BO has recently been applied to high-cost string design problems by \cite{Moss2020}, who consider, among other problems, the task of optimising over molecules. Such tasks are well-suited for BO, due to the high cost of evaluating candidate molecules via wet-lab experiments. \cite{Moss2020} propose a BO framework that fits a GP surrogate model to a popular string-based representation of molecules known as SMILES strings \citep{anderson1987smiles} through a string kernel GP \citep{beck2015learning}. Standard EI arguments are then applied, yielding a highly effective strategy for searching large candidate set of molecules. One practical limitation of this framework, however,  is the large computational cost of string kernels, as incurred for each prediction from the surrogate model GP. Consequently, the framework of \cite{Moss2020} is limited to acquisition functions that require a small number sof surrogate model predictions. Aside from GIBBON, our other considered high-performing batch acquisition functions (MFMES, NEI and KG) require many kernel evaluations for each acquisition function query and the low-cost approaches of DPPEI and LPEI are limited to only Euclidean search spaces. In contrast, GIBBON requires only $B$ surrogate model predictions to measure the utility of a candidate batch and makes no assumptions on the properties of the search space. Therefore, GIBBON can be used to extend the framework of \cite{Moss2020} to batch designs, a property particularly attractive for molecular search applications where it is common practice to synthesis collections of candidate molecules in parallel. 

We now recreate the Zinc example (also considered by \cite{kusner2017grammar} and \cite{griffiths2017constrained}), where we seek to explore a large collection of 250,000 molecules. The task is then to quickly find molecules that score highly according to a chemically-inspired metric, i.e. forming a proxy molecular design loop {with this metric forming our objective function.  As string kernel GPs, which are used to model our moelcules' SMILE strings, have a very large evaluation cost, we cannot evaluate our acquisition function across all the candidate molecules.} Therefore, we  randomly sample $1,000$ molecules for each BO step from which we (greedily) choose to evaluate the $B$ molecules that maximise our GIBBON acquisition function.

We fit our Gumbel sampler on this same sample, re-sampling both the max-values required for GIBBON and the considered $1,000$ molecules at the start of each BO step. We evaluate $20$ randomly chosen molecules to initialise our GP and then allow BO to choose $100$ further molecules, either one by one or as $20$ batches of $5$ molecules or $10$ batches of $10$ molecules. Figure \ref{fig:zinc} shows that even in the purely sequential case, GIBBON provides a modest boost in performance over EI (the acquisition function previously used by \cite{Moss2020}). More importantly, Figure \ref{fig:zinc} also shows that GIBBON is able to provide effective batch optimisation over batches of size 5 and 10, therefore providing an extension of \cite{Moss2020}'s framework where parallel synthesising resources can be used to speed up the molecular search.

\begin{figure}[t]
\centering
\includegraphics[width=0.7\textwidth]{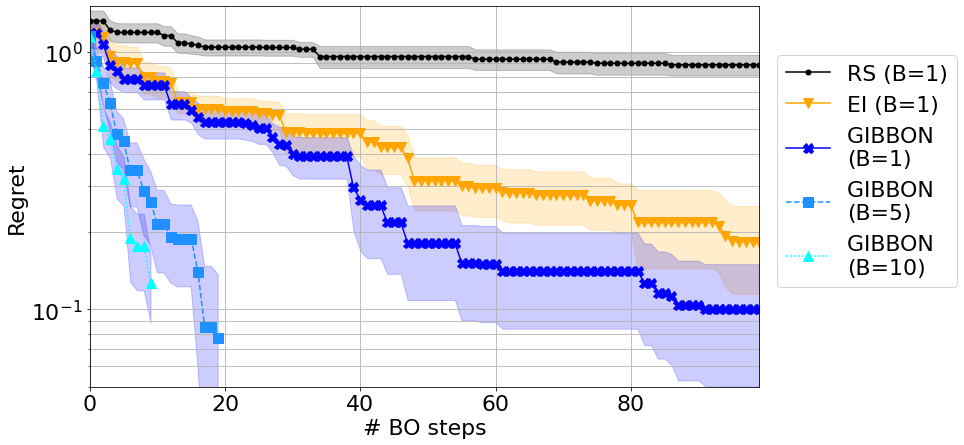}
\caption{ Exploring the Zinc database of molecules with GIBBON. In the purely sequential case, GIBBON finds higher-scoring molecules than EI.  The batched GIBBON approaches reach roughly the same final regret after the same total number of $100$ synthesised molecules { even when GIBBON must choose these evaluations in batches of size 1, 5 or 10. Consequently, GIBBON is able to effectively leverage parallel synthesis resources, reaching the best solutions in fewer BO steps than non-batch alternatives.
For context, we also report the performance of Random Search (RS).}}
\label{fig:zinc}
\end{figure}

\subsection{Bayesian Optimisation by Sampling Hierarchically}
\label{subsec:BOSH}
For our final set of examples, we demonstrate the efficacy of GIBBON as part of a real-world optimisation framework. In particular, we turn to a challenging batch multi-fidelity BO problem inspired by the Knowledge Gradient for Common Random Numbers (KG-CRN) framework of \cite{pearce2019bayesian}. We now provide a brief very overview of the KG-CRN framework and we refer the reader to \cite{pearce2019bayesian} for further details. Our implementation is built upon the Emukit Python package \citep{emukit2019} and was first reported in a workshop paper \citep{moss2020bosh}.

KG-CRN considers BO under highly stochastic evaluations, a scenario where it is commonplace to disregard the original objective function entirely and instead optimise the average of a collection of $K$ specific function realisations, e.g. $K$-fold cross validation (CV) \citep{kohavi1995study} or sample average approximations \citep{kleywegt2002sample}. However, as demonstrated for hyper-parameter tuning \citep{Moss2018}, model selection \citep{moss2019fiesta} and  simulation optimisation \citep{kim2015guide}, optimisation efficiency depends subtly on the choice of $K$. If $K$ is set too low we cannot optimise to high precision, however, setting $K$ too large wastes computation on unnecessarily expensive evaluations. To avoid having to choose $K$ \textit{a-priori}, KG-CRN instead maintains a pool of randomly sampled realisations (e.g. train-test splits or initial environmental conditions) that grows as the optimisation progresses. This construction yields a multi-task BO framework where each individual realisation of the objective function is modelled separately as a perturbation of the true objective function through a Hierarchical Gaussian Process (HGP) \citep{hensman2013hierarchical} (see Appendix \ref{appendix:HGP} for details). Consequently, KG-CRN not only chooses where to evaluate the objective function but also chooses which test problem in which to make the evaluations --- either choosing a member of the previously considered pool of realisations or by generating an entirely new realisation (to be absorbed into the candidate pool for subsequent optimisation steps).

Unfortunately, KG-CRN's acquisition function, a variant of the knowledge gradient of \cite{frazier2008knowledge}, incurs a computational overhead that grows exponentially with the dimensions of the search space and does not support batch optimisation. By replacing this unwieldy acquisition function with GIBBON, we provide our own version of this framework, which we name Bayesian Optimisation Sampled Hierarchically (BOSH). Courtesy of GIBBON, BOSH enjoys small computational overheads and naturally supports batch decision making. We now demonstrate that the GIBBON-based BOSH framework can can provide more efficient and higher-precision optimisation than standard BO across reinforcement learning and hyper-parameter tuning tasks. Full experimental details are provided in Appendix \ref{appendix:experiemnts}. 
 
We report performance across a range of parallel computing resources ($B=1,5,10$), comparing BOSH allocating batches of $B$ points with standard BO routines using the EI and MES acquisition functions to optimise the average of $B$ fixed realisations of the objective function. { For these experiments,  we measure regret as sub-optimality with respect to the best found solution across all methods and replicates.} Unfortunately, \citet{pearce2019bayesian} have yet to provide code for their KG-CRN approach, so we have been unable to provide direct comparisons. However, for the $B=1$ case, we are able to consider FASTCV \citep{swersky2013multi}, an EI-based framework that speeds up optimisation by allowing the evaluation of the individual splits making up $K$-fold CV (for a specific choice of $K$). Unlike our previous examples, we now include the evaluations spent on random initialisation in our plots as the required size of this initialisation is different for each framework (see Appendix \ref{appendix:ini}). { BOSH, for example, is initialised with evaluations at $d+5$ random locations for each of two initial seeds.}

\subsubsection{Reinforcement Learning}
\begin{figure}[t]
\centering
\subfloat[B=1]{\label{LL_1}
    \includegraphics[height=150pt]{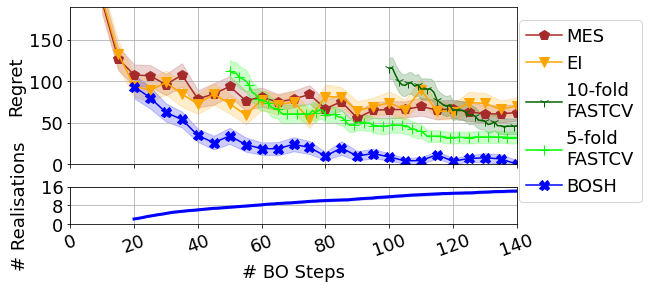}}
\\
\subfloat[B=5]{\label{LL_5}
    \includegraphics[height=130pt]{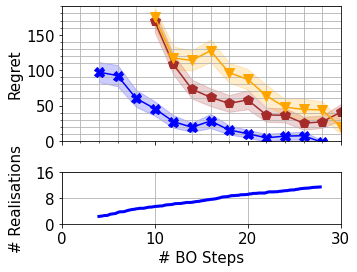}}
\subfloat[B=10]{\label{LL_10}
    \includegraphics[height=130pt]{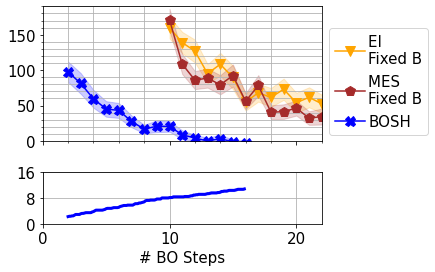}}

\caption{Optimising $7$ parameters of a Lunar Lander controller. We present the regret achieved by each algorithm (top panel) alongside a running count of the number of realisations considered by BOSH (bottom panel). Courtesy of our GIBBON acquisition function, BOSH is able to adaptive consider up to $15$ random conditions to quickly find the optimal controller configuration.}  

\label{LL}
    \end{figure}
For our first experiment, we consider a challenging seven-dimensional stochastic optimisation test-case. We wish to fine-tune a controller for a well-studied reinforcement learning problem, where we must guide a lunar lander across a randomly initialised space to its landing zone with minimal thruster usage (as provided in the OpenAI Gym). Our controller is parameterised by seven unknown constants and a particular configuration can be tested by running a single (or $B$) randomly generated scenarios. We seek to minimise the extra fuel required to land the lander over OpenAI's hard-coded controller (as measured according to a `true' performance measured over a set of  $100$ fixed initial conditions). In this task, each objective function realisation corresponds to an initial environmental condition. As there is substantial variation across different initial conditions, optimising the controller over a small and fixed collection of initialisation fails to provide good `true' performance according to the initial $100$ condition test set (Figure \ref{LL}). Note that FASTCV's need to initialise and then update the large between-realisation correlation matrix severely hampers its optimisation efficiency, as seen by the late start of the corresponding curves in Figure \ref{LL}.

\begin{figure}
\centering
\subfloat[B=1]{\label{IMDB_1}
    \includegraphics[height=150pt]{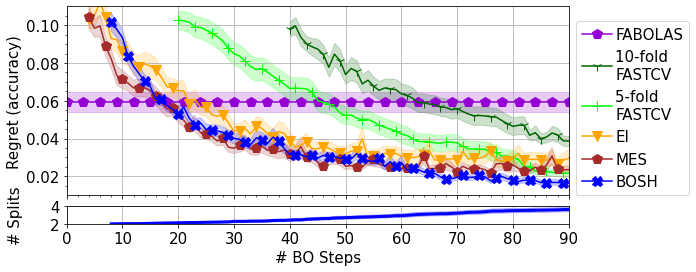}}
\\
\subfloat[B=5]{\label{IMDB_5}
    \includegraphics[height=100pt]{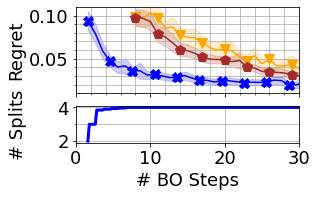}}
\subfloat[B=10]{\label{IMDB_10}
    \includegraphics[height=100pt]{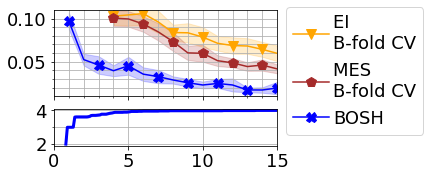}}

\caption{Tuning SVM hyper-parameters for IMDB movie review classification with BOSH. BOSH achieves higher-precision optimisation than all other techniques. When batch computing resources are available, the batch capabilities of  GIBBON allow BOSH to substantially improve optimisation efficiency over standard BO based on $B$-fold CV.}
\label{IMDB_all}
    \end{figure}
\subsubsection{Hyper-parameter Tuning}
We now test the performance of BOSH on a simple ML hyper-parameter tuning task: using a support vector machine (SVM) to classify the sentiment in IMDB movie reviews \citep{maas2011learning}. Here, we seek hyper-parameter values that provide the highest model performance. True model performance is calculated  on a large held-out test set. We stress that these high-cost estimates are only performed retrospectively, after stopping the optimisation, and during the actual tuning our individual performance estimates are generated using a pool of randomly generated train-test splits for BOSH or single train-test splits and $K$-fold CV as fixed evaluation strategies for standard BO.  We also consider the multi-fidelity hyper-parameter tuning framework of FABOLAS \citep{klein2016fast} (following the code provided in \cite{klein2017robo}). As FABOLAS is able to query models using only small proportions of the available data, it is able to find reasonably well performing hyper-parameter configurations in a fraction of the computation used by standard BO and BOSH. However, even if allowed a significantly longer run-time, FABOLAS fails to improve upon this chosen configuration (which we plot as a horizontal line). Figure \ref{IMDB_1} shows that BOSH adaptively considers a pool of up to four train-test splits as the optimisation progresses, providing higher-precision tuning than standard BO based on single train-test splits and substantially faster tuning than standard BO under $5$-fold and $10$-fold cross-validation (Figures \ref{IMDB_5} and \ref{IMDB_10}).

\section{Discussion}

We have presented GIBBON, a general-purpose acquisition function that extends max-value entropy search to provide computationally light-weight yet high performing optimisation for a wide range of BO problems. The efficiency of GIBBON relies on a novel information-theoretical approximation. Moreover, the derivation of this approximation allowed the exploration of the first explicit connection between information-theoretic search, determinantal point process and local penalisation, tying together large sections of the BO literature previously developed and analysed independently.

Not only does GIBBON provide competitive optimisation for common BO extensions like batch and multi-fidelity optimisation, but it forms {} high-performance batch acquisition function suitable for applying BO across highly-structured search spaces, as we demonstrated within a molecular design loop. BO for structured optimisation tasks is a fast growing frontier of the BO literature, with recent work tackling BO for strings \citep{Moss2020,swersky2020amortized}, combinatorial spaces \citep{deshwal2020mercer} and spaces of neural network architectures \citep{kandasamy2018neural}. Therefore, we believe that GIBBON (and our flexible software implementation) will have substantial utility for the machine learning community.

\subsection{Limitations and Future Work}

GIBBON, in its current form, has the two primary practical limitations investigated in our ablation study of Section \ref{subsec:ablation}. Firstly, GIBBON performs poorly for large batch sizes. Improving the large batch capabilities, perhaps though artificially manipulating GIBBON's diversity-quality trade-off, is an important avenue of future work, particularly as the low-cost construction of GIBBON is especially well-suited to large batch scenarios which are currently dominated by simple sampling-based approaches like Thompson sampling \citep{vakili2020scalable}. Secondly, the performance of GIBBON is sensitive to the quality of the max-value samples used within its calculation strategy. Although using a Gumbel sampler to calculate GIBBON provides a truly light-weight acquisition function, we have shown that the  performance of the acquisition function can be improved by considering exact max-value samples. In future work, we will investigate alternative sampling strategies that are more accurate than Gumbel samplers but cheaper than exact Thompson sampling. A promising approach is to follow \cite{hernandez2016predictive} or \cite{takeno2020multi} and employ approximate Thompson sampling methods though kernel decompositions. 

Although shown to be empirically successful, GIBBON has no theoretical guarantees, primarily due to the lack of analysis around our information-theoretical lower bound. Analysing the tightness of this bound could help disentangle which aspects of GIBBON's behaviour are caused by approximation error and which are due to limitations of information-theoretic search strategies in general. In particular, a stronger understanding of approximation quality should explain why GIBBON's performance degrades when building large batches or show exactly when our lower bound is a better approximation of the mutual information than the sampling-based approximations of existing MES extensions. A bound on the approximation error of this bound would also pave the way for convergence guarantees for GIBBON through extensions of the regret bounds of \cite{wang2017max}. To the author's knowledge, no such bound exists for noisy, batch or  multi-fidelity MES-based acquisition functions.

\color{black}

As a final comment, we would like to point out that, although we have already shown GIBBON to have wide applicability, GIBBON can be readily applied to an even wider collection of BO problems. For example, GIBBON can be combined with MESMO \citep{belakaria2019max}, an extension of MES for multi-objective optimisation, to provide {} computationally light-weight acquisition function for batch multi-objective BO. Similarly, GIBBON can also provide a computationally light-weight approach for batch constrained optimisation by extending the MES-based approach of \cite{belakaria2020max}. Finally, GIBBON can be used to improve the performance and reduce the computational cost of any framework relying on batch BO heuristics, for example in non-myopic BO \citep{gonzalez2016glasses,jiang2020binoculars}.

\newpage

\appendix

\section{Extracting The Required Predictive Quantities from a Gaussian Process Surrogate Model}
\label{appendix:quantitites}
We now demonstrate how the distributional quantities required to calculate GIBBON can easily be extracted from a GP surrogate model. For observations $D_n$, let $\textbf{y}_n$ be the already observed evaluations $y$ , and define the kernel matrix $\textbf{K}_n=\left[k(\textbf{z}_i,\textbf{z}_j)\right]_{\textbf{z}_i,\textbf{z}_j\in D_n}$
and kernel vectors $\textbf{k}_n(\textbf{z})=\left[k(\textbf{z}_i,\textbf{z})\right]_{ \textbf{z}_i\in D_n}$ for any valid kernel  defined over the combined search space $\mathcal{Z}=\mathcal{X}\times\mathcal{F}$. Finally, denote the location in the fidelity space corresponding to the true objective function as $\textbf{s}_0$ (i.e $f_{\textbf{s}_0}(\textbf{x})=g(\textbf{x})$). Here, as is standard in multi-fidelity optimisation, we have assumed the ability to query (at least noisily) the true objective function. Then, following \cite{rasmussen2004gaussian} our GP surrogate model provides the following:
\begin{align*}
    \mu^C_i= & \textbf{k}_n((\textbf{x}_i,\textbf{s}_0))^T(\textbf{K}_n+\textrm{diag}(\bm{\sigma}_n))^{-1}\textbf{y}_n \\
    \mu^A_i= &  \textbf{k}_n(\textbf{z}_i)^T(\textbf{K}_n+\textrm{diag}(\bm{\sigma}_n))^{-1}\textbf{y}_n\\
    \Sigma^C_{i,j}= &k((\textbf{x}_i,\textbf{s}_0),(\textbf{x}_j,\textbf{s}_0))-\textbf{k}_n((\textbf{x}_i,\textbf{s}_0))^T(\textbf{K}_n+\textrm{diag}(\bm{\sigma}_n))^{-1}\textbf{k}_n((\textbf{x}_j,\textbf{s}_0))\\
    \Sigma^A_{i,j}= &k(\textbf{z}_i,\textbf{z}_j)-\textbf{k}_n(\textbf{z}_i)^T(\textbf{K}_n+\textrm{diag}(\bm{\sigma}_n))^{-1}\textbf{k}_n(\textbf{z}_j)\\
    \rho_i= &\frac{k(\textbf{z}_i,(\textbf{x}_i,\textbf{s}_0))-\textbf{k}_n(\textbf{z}_i)^T(\textbf{K}_n+\textrm{diag}(\bm{\sigma}_n))^{-1}\textbf{k}_n((\textbf{x}_i,\textbf{s}_0)))}{\sqrt{\Sigma^g_{i,i}\Sigma^y_{i,i}}},
\end{align*} where $\textrm{diag}(\bm{\sigma_n})$ is the $|D_{n}|\times|D_{n}|$ diagonal matrix of observation noises in the evaluations $D_{n}$.

\section{Proof of Theorem~\ref{thm:distribution}}
\label{proofdist}
\dist*
\begin{proof}

As detailed in the main body of this report, we have that \begin{align}
\textbf{C}\sim N(\bm{\mu}^C,\Sigma^C)\quad\textrm{and} \quad A_j \sim  N_1\left(\mu^A_j,\Sigma^A_j\right),\nonumber
\end{align} for some known mean vectors $\bm{\mu}^C,\bm{\mu}^A\in\mathds{R}^B$, a variance vector $\bm{\Sigma}^A\in\mathds{R}^B$ and a co-variance matrix $\Sigma^C\in\mathds{R}^{B\times B}$, as well as a vector $\bm{\rho}\in\mathds{R}^B$ of the correlation between each pair $\{A_j,C_j\}$.  In this section we use $f_{\textbf{X}}$ to denote the probability density function for the random variable $\textbf{X}$ and $f_{\textbf{X},\textbf{Y}}$ to denote the joint probability density function for the random variables $\textbf{X}$ and $\textbf{Y}$.

Now, consider the probability distribution function of random variable of interest:
\begin{align}
    f_{\textbf{A}|C^*\leq m}(\textbf{a})=&\frac{1}{\mathds{P}(C^*\leq m)}\bigints^{\textbf{m}}f_{\textbf{A},\textbf{C}}(\textbf{a},\textbf{b})\:d\textbf{b}\nonumber\\
    =&\frac{1}{\mathds{P}(C^*\leq m)}\bigints^{\textbf{m}}f_{\textbf{A}|\textbf{C}=\textbf{b}}(\textbf{a})f_{\textbf{C}}(\textbf{b})\:d\textbf{b}\nonumber\\=&\frac{1}{\mathds{P}(C^*\leq m)}\bigints^{\textbf{m}}\prod_{i=1}^B\left[f_{A_i|C_i=b_i}(a_i)\right]f_{\textbf{C}}(\textbf{b})\:d\textbf{b},\label{proof1:integral}
\end{align} where $\textbf{b}\in\mathds{R}^B$ and $\textbf{m}=(m,..,m)\in\mathds{R}^B$. The factorisation of $f_{\textbf{A}|\textbf{C}=\textbf{b}}$ is due to the conditional independence of $A_j|C_j$ from $\{C_i\}_{i\neq j}$.

A well-known result for the conditional distribution from a bi-variate Gaussian gives us that for each $i\in\{1,..,B\}$
\begin{align*}
    A_i=a_i|C_i=b_i \sim N_1\left(\mu^A_i+\rho_i\sqrt{\frac{\Sigma^A_{i}}{\Sigma^C_{i,i}}}(b_i-\mu^C_i),(1-\rho^2_i)\Sigma^A_{i}\right),
\end{align*}i.e. we have that
\begin{align}
    \textbf{A}|\textbf{C}=\textbf{b} \sim N\left(\bm{\mu}^A+D(\textbf{b}-\bm{\mu}^C),S\right)\label{helpful},
\end{align}
for diagonal matrices $D,S\in\mathds{R}^B$ with elements $D_{i,i}=\rho_i\sqrt{\frac{\Sigma^A_i}{\Sigma^C_{i,i}}}$ and $S_{i,i}=(1-\rho_i^2)\Sigma^A_i$.

Using (\ref{helpful}), the integrand of (\ref{proof1:integral}) can now be regarded as the product of two b-dimensional Gaussian densities
\begin{align*}
    \left[\prod_{i=1}^bf_{A_i|C_i=b_i}(a_i)\right]f_{\textbf{C}}(\textbf{b}) &= N\left(\textbf{a};\bm{\mu}^A+D(\textbf{b}-\bm{\mu}^C),S\right)*N(\textbf{b};\bm{\mu}^C,\Sigma^C)\\
    &= |D|N\left(\textbf{b};\bm{\mu}^C+D^{-1}(\textbf{a}-\bm{\mu}^A),D^{-1}SD^{-1}\right)*N(\textbf{b};\bm{\mu}^C,\Sigma^C),
\end{align*}
which, using the following standard formula for the product of Gaussians densities
\begin{align*}N(\textbf{x};\textbf{m}_1,\Sigma_1)*N(\textbf{x};\textbf{m}_2,\Sigma_2)=&N(\textbf{m}_1;\textbf{m}_2,\Sigma_1+\Sigma_2)\\& * N(\textbf{x};\left(\Sigma_1^{-1}+\Sigma_2^{-1}\right)^{-1}\left(\sigma_1^{-1}\textbf{m}_1+\Sigma_2^{-1}\textbf{m}_2\right),\left(\Sigma_1^{-1}+\Sigma_2^{-1}\right)^{-1}),\end{align*} can be re-expressed as
\begin{align}
    \left[\prod_{i=1}^bf_{A_i|C_i=b_i}(a_i)\right]f_{\textbf{C}}(\textbf{\textbf{b}}) = |D|
    N&\left(\bm{\mu}^C;\bm{\mu}^C+D^{-1}(\textbf{a}-\bm{\mu}^A),D^{-1}SD^{-1}+\Sigma^C\right)\nonumber\\
    & * N\left(\textbf{b};\bm{\mu}^C+\Sigma^{-1}DS^{-1}(\textbf{a}-\bm{\mu}^A),\Sigma^{-1}\right)\nonumber\\
    =
    N&\left(\textbf{a};\bm{\mu}^A,S + D \Sigma^C D\right)\nonumber\\
    & * N\left(\textbf{b};\bm{\mu}^C+\Sigma^{-1}DS^{-1}(\textbf{a}-\bm{\mu}^A),\Sigma^{-1}\right) \nonumber
\end{align} where $\Sigma=\left(\left(\Sigma^C\right)^{-1}+DS^{-1}D\right)$.

Therefore, we have rewritten the integrand of (\ref{proof1:integral}) as a product of two Gaussian densities, where only one depend on $\textbf{b}$. Consequently, the first Gaussian term can be taken outside the integral, yielding the claimed expression
\begin{align}
    f_{\textbf{A}|C^*<m}(\textbf{a}) = \frac{1}{\mathds{P}(C^*<m)}\phi_{\textbf{X}_1}(\textbf{a})\Phi_{\textbf{X}_2}(\textbf{m}),
    \label{nonumber}
\end{align}
where $\phi_{\textbf{X}_1}$ and $\Phi_{\textbf{X}_2}$ are the probability density and cumulative density functions for the multivariate Gaussian variables
\[\textbf{X}_1\sim \textbf{N}_b\left(\bm{\mu}^A,S +D\Sigma^C D\right)\quad\textrm{and}\quad\textbf{X}_2\sim \textbf{N}_b\left(\bm{\mu}^C+\Sigma^{-1}DS^{-1}(\textbf{a}-\bm{\mu}^A),\Sigma^{-1}\right).\]
\end{proof}

\section{Experimental Details for Synthetic Benchmarks.}
We now provide detailed information about each of our synthetic benchmarks.
\subsection{Standard BO benchmarks}
\label{appendix:synth}

\textbf{Shekel function}. A four-dimensional function with ten local and one global minima defined on $\mathcal{X}\in[0,10]^4$:
\begin{align*}
    f(\textbf{x}) = - \sum_{i=1}^{10}\left(\sum_{j=1}^{4}(x_j-A_{j,i})^2+\beta_i\right)^{-1},
\end{align*}
where 
\begin{align*}
    \beta=\begin{pmatrix}
1\\2\\2\\4\\4\\6\\3\\7\\5\\5
\end{pmatrix}
\quad \textrm{and} \quad
    A=\begin{pmatrix}
4 & 1 & 8 & 6 & 3 & 2 & 5 & 8 & 6 & 7  \\
4 & 1 & 8 & 6 & 7 & 9 & 3 & 1 & 2 & 3.6   \\
4 & 1 & 8 & 6 & 3 & 2 & 5 & 8 & 6 & 7  \\
4 & 1 & 8 & 6 & 7 & 9 & 3 & 1 & 2 & 3.6 
\end{pmatrix}.
\end{align*}

\textbf{Ackley function}. A four-dimensional function with many local minima and a nearly flat outer region surrounding a single global minima defined on $\mathcal{X}\in[-32.768,32.768]^4$:
\begin{align*}
    f(\textbf{x}) = -20 \exp\left(-0.2*\sqrt{\frac{1}{4}\sum_{i=1}^dx_i^2}\right) - \exp\left(\frac{1}{4}\sum_{i=1}^4\cos(2\pi x_i)\right) + 20 +\exp(1).
\end{align*}

\textbf{Hartmann 6 function}. A six-dimensional function with six local minima and a single global minima defined on $\mathcal{X}\in[0,1]^6$:

\begin{align*}
    f(\textbf{x})= -\sum\limits_{i=1}^4\alpha_{i} \exp\left(-\sum\limits_{j=1}^6A_{i,j}(x_j-P_{i,j})^2\right),
\end{align*}

where 
\begin{align*}
    &A=\begin{pmatrix}
10 & 3 & 17 & 3.5 & 1.7 & 8 \\
0.05 & 10 & 17 & 0.1 & 8 & 14 \\
3 & 3.5 & 1.7 & 10 & 17 & 8\\
17 & 8 & 0.05 & 10 & 0.1 & 14
\end{pmatrix},
\quad
&\alpha=\begin{pmatrix}
1 \\
1.2 \\
3 \\
3.2
\end{pmatrix},
\\
\\
    &P=10^{-4}\begin{pmatrix}
1312 & 1696 & 5569 & 124& 8283& 5886\\
2329 & 4135 & 8307 & 3736& 1004& 9991 \\
2348 & 1451 & 3522 & 2883& 3047& 6650 \\
4047 & 8828 & 8732 & 5743& 1091& 381
\end{pmatrix}.
\end{align*}

\subsection{Multi-fidelity benchmarks}
\label{appendix:synthMF}

\textbf{Currin exponential function (discrete fidelity space)}. A two-dimensional function defined on $\mathcal{X}=[0,1]^2$ with two fidelities queried with costs 10 and 1:
\begin{align*}
    f(x_1,x_2,0)=&\left(1-\exp(-\frac{1}{2x_2})\right)\frac{2300x_1^3+1900x_1^2+2092x_1+60}{100x_1^3+500x_1^2+4x_1+20} \\
    f(x_1,x_2,1)=&\frac{1}{4}f(x_1+0.05,x_2+0.05,0)\\+&\frac{1}{4}f(x_1+0.05,max(0,x_2-0.05),0)  \\+&\frac{1}{4}f(x_1-0.05,x_2+0.05,0)\\+&\frac{1}{4}f(x_1-0.05,max(0,x_2-0.05),0).
\end{align*}

\textbf{Hartmann 3 function}. A three-dimensional function with 4 local extrema defined on $\mathcal{X}=[0,1]^3$ with three fidelities ($m=0,1,2$) queried at costs $100,10$ and $1$:

\begin{align*}
    f(x_1,x_2,&x_3,m) = -\sum\limits_{i=1}^4 \alpha_{i,m+1}\exp\left(-\sum\limits_{j=1}^3A_{i,j}(x_j-P_{i,j})^2\right),
\end{align*}
where 
\begin{align*}
    A&=\begin{pmatrix}
3 & 10 & 30 \\
0.1 & 10 & 35 \\
3 & 10 & 30 \\
0.1 & 10 & 35
\end{pmatrix},
\quad
    \alpha=\begin{pmatrix}
1 & 1.01 & 1.02 \\
1.2 & 1.19 & 1.18  \\
3 & 2.9 & 2.8 \\
3.2 & 3.3 & 3.4
\end{pmatrix},
\quad
    P&=\begin{pmatrix}
3689 & 1170 & 2673 \\
4699 & 4387 & 7470 \\
1091 & 8732 & 5547 \\
381 & 5743 & 8828
\end{pmatrix}.
\end{align*}

\textbf{Hartmann 6 function}. A six-dimensional function defined on $\mathcal{X}=[0,1]^6$ with four fidelities ($m=0,1,2,3$) queried at costs $1000,100,10$ and $1$:

\begin{align*}
    f(x_1,x_2,x_3,x_4,x_5,x_6,m)= -\sum\limits_{i=1}^4\alpha_{i,m+1} \exp\left(-\sum\limits_{j=1}^6A_{i,j}(x_j-P_{i,j})^2\right),
\end{align*}

where 
\begin{align*}
    &A=\begin{pmatrix}
10 & 3 & 17 & 3.5 & 1.7 & 8 \\
0.05 & 10 & 17 & 0.1 & 8 & 14 \\
3 & 3.5 & 1.7 & 10 & 17 & 8\\
17 & 8 & 0.05 & 10 & 0.1 & 14
\end{pmatrix},
\quad
&\alpha=\begin{pmatrix}
1 & 1.01 & 1.02 & 1.03\\
1.2 & 1.19 & 1.18 & 1.17\\
3 & 2.9 & 2.8 & 2.7\\
3.2 & 3.3 & 3.4 & 3.5
\end{pmatrix},
\\
\\
    &P=10^{-4}\begin{pmatrix}
1312 & 1696 & 5569 & 124& 8283& 5886\\
2329 & 4135 & 8307 & 3736& 1004& 9991 \\
2348 & 1451 & 3522 & 2883& 3047& 6650 \\
4047 & 8828 & 8732 & 5743& 1091& 381
\end{pmatrix}.
\end{align*}

\textbf{Borehole function}. An eight-dimensional function defined on \begin{align*}
\mathcal{X}=[&0.05,0.15;100,50,000;63070,115600;990,\\&1110;63.1,116;700,820;1120,1680;9855,12055]
\end{align*} with two fidelities queried with costs 10 and 1:
\begin{align*}
    f(\textbf{x},0)=&\frac{2\pi x_3(x_4-x_6)}{\log(x_2/x_1)\left(1+\frac{2x_7x_3}{log(x_2/x_1)x_1^2x_8}+\frac{x_3}{x_5}\right)}, \\
    f(\textbf{x},1)=&\frac{5x_3(x_4-x_6)}{\log(x_2/x_1)\left(1.5+\frac{2x_7x_3}{log(x_2/x_1)x_1^2x_8}+\frac{x_3}{x_5}\right)}. 
\end{align*}

\section{A modified GIBBON for BO with large batches}
\label{appendix:bigbatch}

In Section \ref{subsec:ablation}, we demonstrated that GIBBON fails to effectively control large batches ($B>>10$) and hypothesised that this was due to a dominance of GIBBON's diversity term over its quality term (\ref{decomp}) in these large batch regimes. To support this hypothesis and to propose a variant of GIBBON suitable for large batches, we now investigate a simple modification to GIBBON. In particular, we down-weight GIBBON's diversity term by a factor of $B^2$, with this scaling chosen to reflect the $B^2$ elements present in the the predictive co-variance of a candidate batch of size $B$. Therefore, we have the modified GIBBON acquisition function

\begin{align*}\alpha_n( \{\textbf{z}\}_{i=1}^B) =& \frac{1}{2B^2}\log\big|R\big|+\sum_{i=1}^B\alpha_n^{\textrm{GIBBON}}(\textbf{z}_i) 
\end{align*}
with performance demonstrated in Figure \ref{fig:big_batch}, which repeats the experiment of Section \ref{subsec:ablation}. We see that this simple re scaling is all that is required to allow GIBBON to effectively control large parallel resources.

\begin{figure}
\centering
\subfloat[Standard GIBBON]{
    \includegraphics[height=100pt]{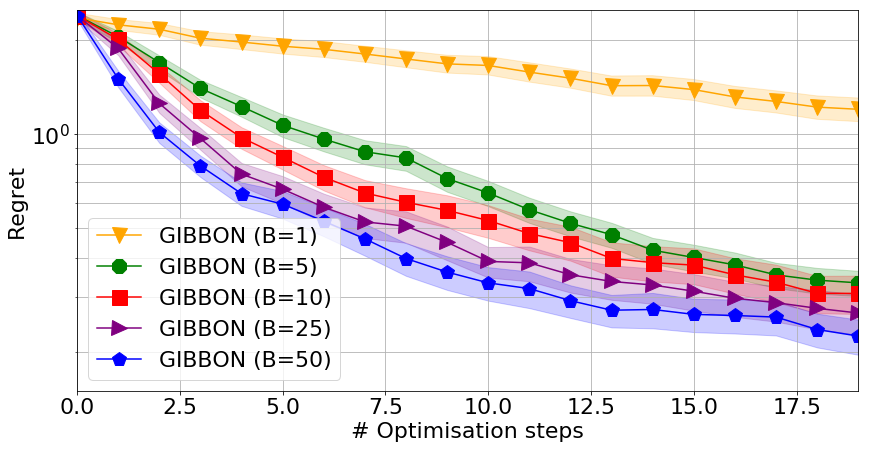}}
\subfloat[Modified]{
    \includegraphics[height=100pt]{GIBBON_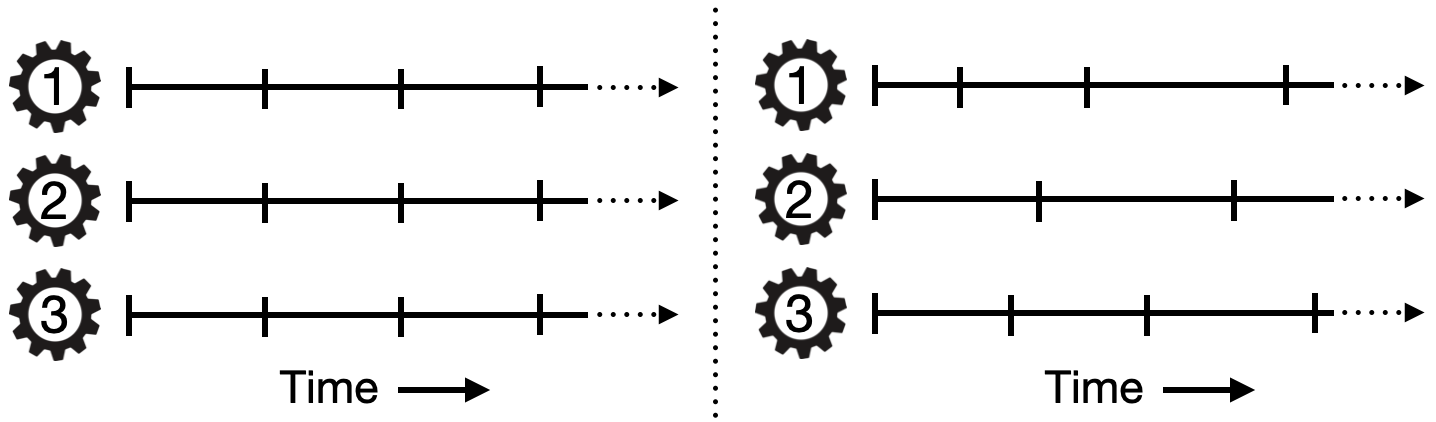}}
\caption{ Optimisation of the noisy Hartmann function over 20 iterations across a range of batch sizes. Modified GIBBON (left) is able to effectively allocate even the largest batches ($B=50$), achieving faster convergence for each increase in batch size. In contrast, standard GIBBON (right) fails to control batches of size $B>10$.}
\label{fig:big_batch}
\end{figure}

\color{black}
\section{Comparing GIBBON with MES}
\label{appendix:MES}
In our synthetic experiments of Section \ref{sec:experiments}, we were surprised to see that GIBBON was able to outperform MES even in the noiseless standard BO case for which MES provides an exact calculation of entropy reductions. As GIBBON approximates MES, we actually expected GIBBON to perform strictly worse than MES in this particular setting. {
However, we stress that although GIBBON is designed to approximate MES, they are still distinct acquisition functions with differing analytical expressions (see Definition \ref{def:gibbon} and Equation \ref{eq:MES_orig}). Consequently, MES and GIBBON induce (potentially slightly) different exploration-exploitation trade-offs, with the behaviour of GIBBON being particularly well-suited to the Shekel and Ackley, but not Hartmann functions (see Figure \ref{fig:synth}). 

In the specific case (not used in practice) where where we base our MES and GIBBON calculations on a single max-value sample, we can show that GIBBON and MES always choose the same query points (see Section \ref{subsec:equiv} ). However this equivalence does not hold for practical implementations of GIBBON and MES (where we typically use $5$ or $10$ samples of $g^*$)

}

\subsection{Equivalence of the degenerate forms of MES and GIBBON}
\label{subsec:equiv}

To gain further intuition about the relationship between MES and GIBBON, we analyse the so-called degenerate forms their acquisition functions. In the degenerate setting, the acquisition functions are built using only a single max-value sample. By defining the function $u(\textbf{x}) = \frac{m^*-\mu^g_n(\textbf{x})}{\sqrt{\Sigma^g(\textbf{x})}}$, degenerate GIBBON and MES can be expressed as 
\begin{align*}
    \alpha_n^{\textrm{GIBBON}}(\textbf{x})&=-\log\left(1-\frac{\phi(u(\textbf{x}))}{\Phi(u(\textbf{x}))}\left(u(\textbf{x})+\frac{\phi(u(\textbf{x}))}{\Phi(u(\textbf{x}))}\right)\right)\\
    \alpha_n^{\textrm{MES}}(\textbf{x})&= \frac{u(\textbf{x})\phi(u(\textbf{x}))}{2\Phi(u(\textbf{x}))}-\log\Phi(u(\textbf{x})).
\end{align*} Although taking very different analytical forms, these two acquisition functions are strictly decreasing in $u$ (as shown in Figure \ref{fig:acqs}), with GIBBON a strict lower bound on MES. So, in this degenerate and noiseless setting, GIBBON and MES would choose exactly the same points under given exact inner-loop maximisation.

Note that in this degenerate setting,  \cite{wang2017max} provide a bound on the simple regret of degenerate MES. As degenerate GIBBON and degenerate MES choose the same query points, the regret bound of degenerate MES is also inherited by degenerate GIBBON. Although this result does not hold for practical implementations of GIBBON based on multiple samples of $g^*$, or when we perform batch or multi-fidelity BO, the existence of this theoretical guarantee provides reassuring evidence for the validity of our approach.

\begin{figure}[t]
\centering
\includegraphics[width=0.5\textwidth]{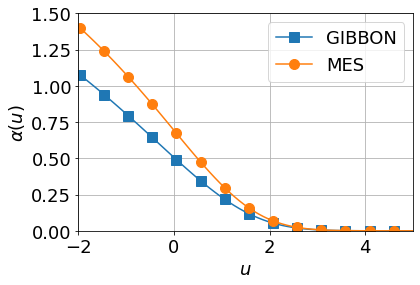}
\caption{Degenerate GIBBON and MES as functions of $u$ {(the standardised difference between the GP posterior at a candidate point and the current estimated maximum value).} The two acquisition functions are monotonically decreasing, taking the same maximiser across a given range of $u$ values. 
\label{fig:acqs}}
\end{figure}

\section{Experimental Details for BOSH }
\label{appendix:experiemnts}
We now provide additional details about our implementation of BOSH and the exact set-ups of our experiments.

\subsection{Hierarchical Gaussian Process}
\label{appendix:HGP}

A natural framework for modelling function realisations as perturbations of a true objective function is a Hierarchical Gaussian Process (HGP) \citep{hensman2013hierarchical}, where the true objective function is modelled as a GP with an `upper' kernel $k_g$, and the deviations to all the individual realisations $f_s$ modelled by another GP with a `lower' kernel $k_f$. As is common in BO, we use Mat\'{e}rn 5/2 kernels \citep{matern1960spatial}. The HGP structure is equivalently understood as  each $f_{s}$ being a conditionally independent GPs with shared mean function $g$, i.e.
\begin{align}
y_i = f_{s_i}(\textbf{x}_i)+\epsilon_i \quad \text{for} \quad
f_{s} \sim \mathcal{GP}(g,k_f) \quad \text{where} \quad
g \sim \mathcal{GP}(0,k_g) 
, \nonumber
\end{align}
for $\epsilon_i\stackrel{\rm i.i.d}{\sim}\mathcal{N}(0,\sigma^2)$. This induces a prior covariance structure of 
\begin{align}
Cov(f_s(\textbf{x}),f_{s'}(\textbf{x}')))=k_g(\textbf{x},\textbf{x}')+\mathds{I}_{s=s'}k_f(\textbf{x},\textbf{x}') \quad \text{and} \quad
Cov(f_s(\textbf{x}),g(\textbf{x}'))=k_g(\textbf{x},\textbf{x}'),\nonumber
\end{align}
where $\mathds{I}$ is an indicator function.

\subsubsection{Predictive Distribution of an HGP}

Crucially, given observations $D_n=\{(\textbf{x}_i,s_i,y_i)\}_{i=1}^n$, the HGP provides a bi-variate Gaussian joint distribution for $(y_s(\textbf{x}),g(\textbf{x}))\,|\,D_n$,  the quantities required to evaluate GIBBON. We will now provide closed form expressions for this joint predictive distributions of $g(\textbf{x})$ and $y_s(\textbf{x})$ given a set of collected evaluations $D_n=\{(\textbf{x}_i,s_i,y_i)\}_{i=1}^n$ (location-realisation-evaluations tuples), where $y_i=f_{s_i}(\textbf{x}_i)+\epsilon$ under Gaussian noise $\epsilon\sim\mathcal{N}(0,\sigma^2)$. 

Defining a compound kernel  $\tilde{k}$ (defined over $\mathcal{X}\times S$) as $\tilde{k}((\textbf{x},s),(\textbf{x}',s'))=k_g(\textbf{x},\textbf{x}')+\mathds{I}_{s=s'}k_f(\textbf{x},\textbf{x}')$ and following \citet{rasmussen2004gaussian} and \citet{hensman2013hierarchical}, our joint posterior distribution can be written as

\begin{eqnarray}
\lefteqn{\begin{pmatrix}g(\textbf{x})\\
y_s(\textbf{x})
\end{pmatrix} \, \bigg|\, D_n}  \quad\quad\quad\quad\quad\quad\sim  N\left[\left(\begin{array}{c}
\mu_n^g(\textbf{x})\\
\mu_n(\textbf{x},s)
\end{array}\right),\left(\begin{array}{ccc}
{\sigma^{g2}_n}(\textbf{x}) & \Sigma_n(\textbf{x},s)\\
\Sigma_n(\textbf{x},s) & \sigma_n^2(\textbf{x},s) + \sigma^2
\end{array}\right)\right],\nonumber
\end{eqnarray}
where 
\begin{align}
\mu_n(\textbf{x},s)=&\tilde{\textbf{k}}_n((\textbf{x},s))^T\left(\tilde{\textbf{K}}_n+\sigma^2I_n\right)^{-1}\textbf{y}_n\nonumber\\
\mu^g_n(\textbf{x})=&\textbf{k}^g_n((\textbf{x},s))^T\left(\tilde{\textbf{K}}_n+\sigma^2I_n\right)^{-1}\textbf{y}_n\nonumber\\
\sigma^2_n(\textbf{x},s)=&\tilde{k}\left((\textbf{x},s),(\textbf{x},s)\right)-\tilde{\textbf{k}}_n((\textbf{x},s))^T\left(\tilde{\textbf{K}}_n+\sigma^2I_n\right)^{-1}\tilde{\textbf{k}}_n((\textbf{x},s))\nonumber\\
\sigma^{g2}_n(\textbf{x})=&k^g\left(\textbf{x},\textbf{x}\right)-\textbf{k}^g_n(\textbf{x})^T\left(\tilde{\textbf{K}}_n+\sigma^2I_n\right)^{-1}\textbf{k}^g_n(\textbf{x})\nonumber\\
\Sigma_n(\textbf{x},s)=&k^g\left((\textbf{x},s),(\textbf{x},s)\right)-\tilde{\textbf{k}}_n((\textbf{x},s))^T\left(\tilde{\textbf{K}}_n+\sigma^2I_n\right)^{-1}\textbf{k}^g_n(\textbf{x}),\nonumber
\end{align}
for $\tilde{\textbf{K}}_n=\left[\tilde{k}((\textbf{x}_i,s_i),(\textbf{x}_j,s_j))\right]_{i,j=1,..,n}$, $\tilde{\textbf{k}}_n((\textbf{x},s))=\left[\tilde{k}((\textbf{x}_i,s_i),(\textbf{x},s))\right]_{ i=1,..,n}$,$\textbf{k}^g_n(\textbf{x})=\left[k_g(\textbf{x}_i,\textbf{x})\right]_{i=1,..n}$ and $\textbf{y}=[y_i]_{i=1,..,n}$. 


Note that predicting from our HGP requires the inversion of the $n\times n$ matrix $\tilde{\textbf{K}}_n+\sigma^2I_n$ and so has comparable cost to predictions from standard GPs.

\subsubsection{BOSH's Kernel Structure}
Our implementation of BOSH uses the following structure for the upper and lower kernels of the HGP:
\begin{align}
    k_g(\textbf{x},\textbf{x}')&=k_{\alpha_g,\beta}(\textbf{x},\textbf{x}')\nonumber\\
    k_f(\textbf{x},\textbf{x}')&=k_{\alpha_f,\beta}(\textbf{x},\textbf{x}')+\sigma^2_f\nonumber,
\end{align}
where $k_{\alpha,\beta}$ denotes the Mat\'{e}rn 5/2 \citep{matern1960spatial} kernel with variance $\alpha\in\mathds{R}$ term and length scales $\beta\in\mathds{R}^d$ hyper-parameters, i.e  
\begin{align}
k_{\alpha,\beta}(\textbf{x},\textbf{x}')=\alpha(1&+\sqrt{5}d_{\beta}(\textbf{x},\textbf{x}')+\frac{5}{3}d_{\beta}(\textbf{x},\textbf{x}')^2)e^{-\sqrt{5}d_{\beta}(\textbf{x},\textbf{x}')},\nonumber
\end{align}
for a weighted distance measure $d_{\beta}(\textbf{x},\textbf{x}')=(\textbf{x}-\textbf{x}')^Tdiag(\beta)(\textbf{x}-\textbf{x}')$. 

As the length-scales are shared between the lower and upper kernels, the total number of kernel parameters for BOSH (including the scale of observation noise $\sigma^2$in our Gaussian likelihood) is $d+4$, only two more than a standard GP with a Mat\'{e}rn 5/2 kernel.

\subsection{Initialisation Costs}
\label{appendix:ini}
Before beginning any BO routine, we must collect an initialisation of points to fit the surrogate model. To allow stable maximisation of the marginal likelihood, it is common to initialise with at least as many evaluations as unknown kernel parameters (to guarantee identifiability). For standard BO, this corresponds to $d+3$ evaluations of the chosen evaluation strategy (i.e requiring $B*(d+3)$ individual function evaluations). For BOSH, rather than using separate lower and upper kernels for our HGP, we found that tying length-scales between each kernel greatly improved the stability of the HGP. Therefore, our HGP has $d+4$ kernel parameters. we allowed BOSH $d+5$ evaluations for each of the seeds in an initial seed pool with two elements. In contrast, reliable initialisation of FASTCV's $B\times B$ correlation matrix (of which its performance was very sensitive) required at least $d+3$ evaluations for each of its $B$ considered seeds. { We found that using fewer initial points severely limited the initial performance of all these methods. } Therefore, as well as providing improved efficiency and precision once optimisation begins, BOSH's ability to model only as many individual seeds as required allows significantly lower initialisation costs.

\subsection{Reinforcement Learning: Lunar Lander}

The Lunar Lander problem is a well-known reinforcement learning task, where we must control three engines (left, main and right) to successfully land a rocket. The learning environment and a hard-coded PID controller is provided in the OpenAI gym \footnote{\textit{https://gym.openai.com/}}. We seek to optimize the $7$ thresholds present in the description of the controller to provide the largest average reward over $100$ random initial conditions. Our RL environment is exactly as provided by OpenAI, with the small modification of randomly initializing the initial lander location (as-well as random initial velocities and terrain) to make a more challenging stochastic optimization problem. We lose 0.3 points per second of fuel use and 100 if we crash. We gain 10 points each time a leg makes contact with the ground, 100 points for any successful landing, and 200 points for a successful landing in the specified landing zone· Each individual run of the environment allows the testing of a controller on a specific random seed.

\subsection{Hyper-parameter Tuning: IMDB SVM}
We tested the performance of BOSH on a real ML problem: tuning a sentiment classification model on the collection of $25,000$ positive and $25,000$ negative IMDB movie reviews used by \cite{maas2011learning}, seeking the hyper-parameter values that provide the model with the highest accuracy. We tune the flexibility of the decision boundary ($\textrm{C}$) and the RBF kernel coefficient ($\textrm{gamma}$) for an SVM \citep{cortes1995support}, a  standard model for binary text classification. As is common in the natural language processing literature, we train our classifier on a bag-of-words representation of the data \citep{jurafsky2014speech}, using tf-idf weightings \citep{salton1988term}. In order to measure the true performance of tuned hyper-parameters, we must use the available data in an unconventional way. By restricting our model fitting and tuning to a randomly sub-sampled $1,000$ review subset to act as our training set for all our experiments, we provide a large held-out collection of $49,000$ movie reviews, upon which we can calculate the `true' performance of the hyper-parameter configurations chosen by our tuning algorithms. We then randomly draw our train-test splits from this fixed training set, with test sets of $10\%$. As already argued, the model scores based on a particular evaluation strategy do not necessarily correspond to the true performance and so, although we acknowledge that this contrived use of the data is not standard, this set-up is necessary to measure the improved efficiency and reliability provided by BOSH.

\bibliography{sample}

\end{document}